\documentclass{article}
\usepackage[utf8]{inputenc}
\usepackage{graphicx}
\usepackage{amsmath}
\usepackage{amsfonts}
\usepackage{amsthm}
\usepackage{amsmath,amssymb} 
\usepackage{color}

\def\given{{\hskip1pt|\hskip1pt}}
\def\argmax{\mathop{\rm argmax}}
\def\argmin{\mathop{\rm argmin}}

\def\qedsymbol{\hfill$\blacksquare$\\[2mm]}

\newcounter{ExampleCount}\setcounter{ExampleCount}{0}
\newenvironment{example}
	{\vspace{5pt}\refstepcounter{ExampleCount}\textbf{Example \arabic{ExampleCount}.}}
	{\qedsymbol\vspace{10pt}}

\newenvironment{example*}
	{\vspace{5pt}\refstepcounter{ExampleCount}\textbf{Example \arabic{ExampleCount}.}}
	{\qedsymbol\vspace{10pt}}



\newtheorem{theorem}{Theorem}
\newtheorem{definition}{Definition}




\begin{document}

\title{How to Formulate and Solve Statistical Recognition and Learning Problems}
\author{M.I.Schlesinger,  schles@irtc.org.ua \\
        International Research and Training Center of \\Informational Technologies and Systems\\ 
       Ukrainian Academy of Sciences\\
       Kiev
       \and
       E.V.Vodolazskiy, waterlaz@gmail.com \\
       International Research and Training Center of \\Informational Technologies and Systems\\ 
       Ukrainian Academy of Sciences\\
       Kiev}

\maketitle

\begin{abstract}%
We formulate problems of statistical recognition and learning in a common framework of complex hypothesis testing. 
Based on arguments from multi-criteria optimization, we identify strategies that are improper for solving these problems
and derive a common form of the remaining strategies. We show that some widely used approaches to recognition and learning are improper in this sense.
We then propose a generalized formulation of the recognition and learning problem which embraces the whole range of sizes of the learning sample, including the zero size. 
Learning becomes a special case of recognition without learning. 
We define the concept of closest to optimal strategy, being a solution to the formulated problem, and describe a technique for finding such a strategy. 
On several illustrative cases, the strategy is shown to be superior to the widely used learning methods based on maximal likelihood estimation.

\textbf{Keywords: } 
complex object recognition, learning, multi-criteria optimization, Bayesian strategy, small sample problem
\end{abstract}

\section{Introduction }
\label{SecIntroduction}

We consider three wide directions in theoretical and applied statistics: complex hypothesis testing \cite{chernoff, kendal, lehman, morris}, 
the empirical Bayesian approach \cite{casella, neyman2Breaks, robbinsAssymptotical, robbinsEmpirical} and
learning in pattern recognition \cite{duda, webb}. 
Each of these fields is well-known and the list of relevant publications would be longer than this paper. 
However, the methods
developed in the three fields are not fully consistent with one another.
Indeed, the most popular learning methods seem to have been developed independently of complex hypothesis testing and sometimes even contradict it. 
It seems as if the well-known ideas of self-learning or unsupervised learning \cite{dempsterEM, schlez10lec, schlezEM} appeared independently of the empirical Bayesian approach. 
Such an isolated existence of learning methods results in their rather fragile foundation.

Learning in pattern recognition deals with the situation when the statistical model of an object to be recognized is not defined uniquely and only a set of models is known that includes the true model. 
The common feature of modern learning methods is that a so-called learning sample is given and based on this sample a certain model is selected from the given set. 
Then the recognition strategy is derived as if the selected model was the true one.
Usually, the selected model is a consistent estimate of the true model and therefore this approach is acceptable, provided that the learning sample is large enough. 
However, if the learning sample has limited size the approach gives no guarantee for subsequent recognition.
The situation is known as the "small sample problem". 

Thus there is an obvious gap in our knowledge of statistical recognition under uncertain statistical model. 
Strictly speaking, satisfactory clarity is achieved only in two extreme cases. 
When no learning sample is available the methods for complex hypothesis testing are applicable, such as the minimax approach. 
When a large enough learning sample is available the learning methods are applicable. 
Practical situations fall in the gap between these two extreme cases because any sample has a certain size, it is not arbitrarily large.
The situation is discussed in more detail  in \cite[page 272]{schlez10lec}.

The paper fills this gap. Using the concepts of multi-criteria optimization \cite{ehrgott2005, geoffrion1968}, we analyze the empirical Bayesian approach,  
learning in pattern recognition and complex hypothesis testing in a unified framework.
Based on this unification, we formulate the learning problem as a special case of recognition without learning, 
the latter in turn being a special case of complex hypothesis testing. The proposed formulation covers the whole range of learning sample sizes, including zero size. 
Thus, the  small sample problem disappears as an independent problem and is embraced by the proposed formulation.

The paper is organized as follows:

{\bf Section \ref{SecComplexHypothesis}.} Recognition without learning is considered under an uncertain statistical model of an object to be recognized. 
The concepts of improper and Bayesian strategies are defined and the dichotomy theorem is proved that each strategy is either Bayesian or improper. 
It is shown that the strategies based on maximum likelihood model estimate are improper. The concept of closest to optimal recognition is defined.

{\bf Section \ref{EmpiricalBayesian}.} The seminal ideas of H.Robbins \cite{robbinsAssymptotical}, 
which addressed recognition of compound objects and initiated the empirical Bayesian approach \cite{neyman2Breaks}, are revisited. 
Robbins' ideas are formalized as finding a closest  to optimal strategy. 
It is shown that strategies based on the well-known EM-algorithms differ from the closest to optimal ones and are improper for some objects.

{\bf Section \ref{PatternRecognitionLearning}.} Recognition with learning is formulated as a special case of recognition without learning. 
It is shown that the widely used strategies based on maximal likelihood model estimate are improper. 
It is also shown that the minimax approach to statistical decision making is unsuitable for  solving the learning problem.  
The concept of closest to optimal learning is defined and its formal properties are analyzed.

{\bf Section \ref{SubBayesianLearning}.} The technique of closest to optimal recognition is described.

{\bf Section \ref{Testing}.} Closest to optimal learning procedures are tested on several illustrative examples. The tests include supervised and unsupervised learning. 
The test with supervised learning shows that the closest to optimal learning procedure is superior to the procedure based on maximum-likelihood model estimation. 
The tests with unsupervised learning compare the closest to optimal procedures with EM-algorithms and some heuristic methods. The proposed approach yields the best result.

{\bf Section \ref{Novelty}.} The results of the paper as well as their consequences are summarized.

The results of Sections \ref{SecComplexHypothesis}, \ref{EmpiricalBayesian} and \ref{PatternRecognitionLearning} are mostly negative, 
arguing that the commonly used approaches have fundamental drawbacks and thus new approaches are needed. 
The positive results concern mainly the concept of closest to optimal learning and are described in Sections \ref{PatternRecognitionLearning}, \ref{SubBayesianLearning} and  \ref{Testing}.

\section{Complex object recognition}
\label{SecComplexHypothesis}

The main concepts of complex hypothesis discrimination have been formulated in the general statistical decision theory \cite{chernoff, kendal, lehman, morris}. 
Our aim is to relate the result of this theory to statistical recognition and machine learning \cite{duda, webb}. 
Therefore we formulate the main concepts of statistical decision theory using pattern recognition terminology. 
Basic concepts of our consideration are a set of observable signals, a set of hidden states and a set of models. All three sets are assumed to be finite. 
This simplification allows to articulate the main ideas of the article using the simplest mathematical tools.
Allowing some of the sets to be infinite might obscure the main ideas by more fine mathematical formulations.  
Nevertheless, some examples below involve intervals of real numbers. Our theoretical results are applied to such examples without further ado. 
Probability density is simply used instead of probability.

Let an object to be recognized be characterized by two parameters $x$ and $y$ that take values from finite sets $X$ and $Y$ respectively. 
The parameter $x$ is an observable signal generated by the object. 
The parameter $y$ is a hidden state of the object. 
The signal $x$ and the state $y$ form a random pair, so that a probability distribution $p:X \times Y \rightarrow \mathbb{R}$
over the set of pairs $(x, y)$ exists. However, the probability distribution $p$ may be either known or not known and this makes the distinction between simple and complex objects.

\begin{definition}
A simple object is represented by a triple $\left\langle X, Y, p:X\times Y\rightarrow\mathbb{R}\right\rangle$
where $X$ is a set of signals, $Y$ is a set of states, $p(x,y)$ is a joint probability of signal $x \in X$ and state $y \in Y$.
\end{definition}

We consider more complex cases when the probability distribution $p$ is not known and only a class
of distributions is known that contains $p$.
This uncertainty is represented by a finite set $\Theta$ of models $\theta$, so that
the probability of a pair $(x,y)$ is defined for each model $\theta$ by $p(x,y;\theta)$.

\begin{definition} \label{DefComplex}
A complex object is represented by a quadruple $\left\langle X, Y, \Theta, p:X\times Y \times \Theta \rightarrow\mathbb{R}\right\rangle$
where $X$ is a set of signals, $Y$ is a set of states, $\Theta$ is a set of models and
$p(x,y;\theta)$ is a joint probability of signal $x \in X$ and state $y \in Y$ for the model $\theta \in \Theta$.
\end{definition}

The above-defined complex object is not to be confused with what can be called a {\em pseudo-complex object\/}.
The latter takes place when the model $\theta$ is a random variable with a known a priori probability distribution $p_\Theta{:}\ \Theta \rightarrow \mathbb{R}$. 
In fact, a pseudo-complex object does not differ from a simple one because for such an object the joint probability
\begin{equation} \nonumber
p(x,y)=\sum_{\theta \in \Theta} p_\Theta(\theta) \, p(x,y;\theta)
\end{equation}
is defined for every pair  $x \in X$, $y \in Y$. 
In a complex object the model $\theta$ is not random but fixed although unknown. 
It is only known that the model belongs to a given set.
This paper deals with complex objects, neither with simple ones nor with pseudo-complex ones.

Recognition means making a reasonable decision about the hidden state based on the observed signal. 
Below, we will also consider cases of the decision about current object state being based not only on the currently observed signal 
but on the base of additional information called learning information. 
In order to discriminate these two cases we use a term recognition without learning when the decision is based only on the observed signal and 
recognition with learning when the decision is based both on the signal and on the learning information.
  
Recognition without learning is formalized by a function $q_X{:}\ Y \times X\rightarrow \mathbb{R}$, called a randomized strategy. 
The values  $q_X(y'\given x),y'\in Y, x\in X$, of the function $q_X$ are conditional probabilities. 
Let $Q_X$~be the set of all possible strategies, that is, the set of functions $q_X{:}\ Y \times X\rightarrow \mathbb{R}$ satisfying $q_X(y'\given x)\geq 0$ for every pair $y'\in Y$, $x\in X$ and
$\sum\limits_{y' \in Y}q_X(y'\given x)=1$ for every $x\in X $. 
The strategy $q_X{:}\ Y \times X\rightarrow \mathbb{R}$ defines recognition as follows: if the signal $x$ is observed then the decision 
"the object is in state $y'$" is made with probability $q_X(y'\given x)$. 
The reasonableness of a strategy is formalized by a function $w{:}\ Y \times Y \rightarrow \mathbb{R}$, called a loss function. 
Its value $w(y,y')$, $y \in Y$, $y' \in Y$, is the loss of making a decision that the object state is $y'$ when the true object state is $y$.

The statistical theory of complex hypothesis testing is an appropriate mathematical formalization for recognition of complex objects, not only simple ones. 
However, strategies that are widely used in recognition practice have almost nothing in common with this formalization. 

\begin{example}\label{letterexample}
Let~$x$ be a picture and $y \in Y $ be the name of a letter drawn on the picture.
The letters are drawn by some known person. A priori probabilities $p_Y(y), y \in Y$, of the letters are known for this person,
as well as the conditional probabilities $p_{X\given Y}(x \given y)$.
This is the case of a simple object.

Suppose now that there is a known finite team $\Theta$ of persons $\theta$ drawing pictures and 
the person $\theta$ that provides a picture for recognition is picked randomly with known probability $p_\Theta(\theta)$.
It is necessary to recognize the letter drawn on the picture, not the person who drew it.
One can assume that a priori probabilities $p_Y(y)$, $y \in Y $, of the letters do not depend on the person. 
However, the picture $x$ depends both on the letter $y$ and on the person $\theta \in \Theta$ who drew it.
This dependency is given by conditional probabilities $p_{X\given Y}(x\given y;\theta)$, defined for every picture $x \in X$, 
every letter  $y \in Y $ and every person $\theta \in \Theta$.
This is the case of a pseudo-complex object. 

In fact, this case does not differ from the case of a simple object. 
Indeed, one may substitute the whole team $\Theta$ of persons with a single abstract person $\theta^*$ who draws a random picture $x$ of a letter $y$ with conditional probability 
$p^*(x|y)=\sum\limits_{ \theta \in \Theta } p_\Theta(\theta)p_{X\given Y}(x\given y;\theta)$.

Finally, let us consider the case when all pictures presented for recognition are drawn by a single person $\theta^0$ from a known team $\Theta$. However, this person is unknown. 
So, the model $\theta^0$ is fixed, not random, and a concept of a priori probability distribution $p_{\Theta}$ becomes unappropriated in this case.
Same as before, it is necessary to recognize the letter, not the person who drew it.
This is the case of a complex object.

Commonly used algorithms for this task make the decisions 
	\begin{equation} \nonumber
		y^*=\argmax _{y \in Y} \max_{\theta \in \Theta} p(x\given y;\theta) \quad\text{or}\quad
		y^*=\argmax _{y \in Y} \max_{\theta \in \Theta} p(x,y;\theta).
	\end{equation}
	Neither of these two decisions follow from the statistical decision theory. 
	Moreover, we will show later that both contradict this theory. 
\end{example}

Statistical recognition problems for simple and complex objects are essentially different from each other in nature. 
In the case of a simple object, every strategy $q_X$ is characterized by a single number, namely the expected value $R_X(q_X)$ of the losses,
\begin{equation} \nonumber
R_X(q_X)=\sum_{x \in X} p(x) \sum_{y' \in {Y}} q_X(y'\given x) \sum_{y \in {Y}} p(y\given x) \, w(y,y'),
\end{equation}
called the risk of strategy $q_X$. This leads to the known problem of Bayes risk (single-criterion) minimization over the strategy set. 
In the case of a complex object, every strategy $q_X$ is characterized by $|\Theta|$ numbers $R_X(q_X,\theta)$, $\theta \in \Theta$,  namely the risks of the strategy with respect to every model,
\begin{equation} \label{RiskDef}
R_X(q_X,\theta)=\sum_{x \in X} \sum_{y' \in {Y}} \sum_{y \in {Y}} q_X(y'\given x) \, p(y,x;\theta) \, w(y,y').
\end{equation}
The complex object case goes beyond the scope of single-criterion optimization and is subject to multi-criteria optimization. 
We follow its main ideas (see, for example, the book~\cite{ehrgott2005}) and translate them into the context of our problem. 
First of all, we define some particularly bad strategies that do not optimize any reasonable criterion and thus they are useless. 

\begin{definition} \label{def:improper}
A strategy $q_X'$ predominates a strategy $q_X''$ if the strong inequality $R_X(q_X',\theta) < R_X(q_X'',\theta)$ holds for every model $\theta \in \Theta$; 
a strategy $q_X^0$ is called improper if a strategy $q_X'$ exists that predominates $q_X^0$.
\end{definition}
One can see that the definition of improper strategy is rather strong. It does not embrace all strategies that can be treated as bad ones. For example, each strategy
$\argmin\limits_{q_X \in Q_X} R_X(q_X,\theta^*)$, $\theta^* \in \Theta$, 
is not improper but its usefulness is questionable because each such strategy takes into account only one single model $\theta^*$ and ignores all the others. 
Therefore, if some strategy is not improper it does not mean that it is unconditionally suitable for concrete application. 
However, if a strategy is proved to be improper it shows that it has a serious defect independently of the expected application. 

We want to exclude all improper strategies from consideration and derive the common form of all other strategies. 
Such a dichotomy is possible due to the known result in multi-criteria optimization \cite[Theorem 3.5]{ehrgott2005}. 
The theorem should be central to complex object recognition both with and without learning. 
However, the  known recognition methods have been developed and continue being developed as if the  theorem did not exist. 
This is not surprising because the theorem has been formulated and proved in its own conceptual and terminological context, which is far from pattern recognition. 
We translate and even prove the theorem using the concepts of our article.  

Let $\tau {:}\  \Theta \rightarrow \mathbb{R}$ be a function, called the weight function, satisfying $\tau (\theta) \geq 0$ for every $\theta \in \Theta$ and $\sum\limits_{\theta \in \Theta} \tau (\theta)=1$. The set of all such functions is denoted by~$T$.

\begin{definition} \label{def:bayes}
 The strategy
$\argmin\limits_{q_X \in Q_X} \sum\limits_{\theta \in \Theta} \tau(\theta) R_X(q_X,\theta)$
is called Bayesian with respect to the weight function~$\tau$. 
A strategy $q_X$ is called  Bayesian if a weight function~$\tau \in T$ exists with respect to which the strategy $q_X$ is Bayesian.
\end{definition}

By this definition, every Bayesian strategy is optimal in a sense that
it minimizes the Bayesian risk of recognizing a pseudo-complex object with a priori probability $\tau(\theta)$ of model~$\theta$. 
For a complex model, such treatment of Bayesian strategy is meaningless because the model is not random and its a priori probability distribution is not defined.
However, the concept of Bayesian strategies allows to partition the set of all strategies into the set of improper strategies and the set of all other strategies. 
This dichotomy is given by the following theorem, which states that every strategy is either Bayesian or improper.

\begin{theorem} \label{StrictlyBadStrategy}
	For every strategy $q_X^0 \in Q_X$, either a weight function $\tau ^* \in T$ exists such that
	\begin{equation} \label{qIsBayes}
		q_X^0=\argmin_{q_X \in Q_X}\sum_{\theta \in \Theta}\tau ^*(\theta) R_X(q_X,\theta)
	\end{equation}
	or a strategy $q_X^*$ exists such that the inequality
	\begin{equation} \label{qIsBad}
		R_X(q_X^*,\theta)<R_X(q_X^0,\theta)
	\end{equation}
	holds for every $\theta \in \Theta$. These two properties of strategies are incompatible.
\end{theorem}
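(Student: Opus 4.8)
The plan is to recast the claim geometrically and settle it by a separating-hyperplane argument. First I would pass to the \emph{risk image} $\mathcal{R}:=\{\,(R_X(q_X,\theta))_{\theta\in\Theta} : q_X\in Q_X\,\}$, viewed as a subset of the finite-dimensional space $\mathbb{R}^{\Theta}$ of real functions on $\Theta$. Since $Q_X$ is a polytope and each $R_X(\cdot,\theta)$ is linear in $q_X$ by \eqref{RiskDef}, the set $\mathcal{R}$ is a convex (in fact compact) polytope, and $r^0:=(R_X(q_X^0,\theta))_{\theta\in\Theta}$ is one of its points. Introduce the open ``improvement cone'' $C:=\{\,r\in\mathbb{R}^{\Theta} : r_\theta<r^0_\theta \text{ for every } \theta\in\Theta\,\}$. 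The pivotal remark is that \eqref{qIsBad} holds for some $q_X^*$ --- equivalently, $q_X^0$ is improper in the sense of Definition \ref{def:improper} --- if and only if $\mathcal{R}\cap C\neq\emptyset$.

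The proof then branches on this dichotomy. If $\mathcal{R}\cap C\neq\emptyset$, any $q_X^*$ whose risk vector lies in the intersection witnesses \eqref{qIsBad}, and we are done. Otherwise $\mathcal{R}$ and $C$ are disjoint nonempty convex subsets of $\mathbb{R}^{\Theta}$, so (using finite dimensionality, with $C$ moreover open) there is a nonzero vector $\tau\in\mathbb{R}^{\Theta}$ and a constant $c$ with $\sum_{\theta\in\Theta}\tau(\theta)r_\theta\geq c$ for all $r\in\mathcal{R}$ and $\sum_{\theta\in\Theta}\tau(\theta)r_\theta\leq c$ for all $r\in C$. I would then read off that $\tau$ is a positive multiple of a weight function: since $C$ contains points whose $\theta$-th coordinate is arbitrarily negative with the remaining coordinates fixed, boundedness from above of $r\mapsto\sum_\theta\tau(\theta)r_\theta$ on $C$ forces $\tau(\theta)\geq 0$ for every $\theta$; as $\tau\neq 0$, rescaling gives $\sum_\theta\tau(\theta)=1$, i.e.\ $\tau\in T$.

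It remains to identify $c$. With $\tau\geq 0$ the supremum of $\sum_\theta\tau(\theta)r_\theta$ over $r\in C$ equals $\sum_\theta\tau(\theta)r^0_\theta$ (it is approached along $r=r^0-\varepsilon\mathbf{1}$ as $\varepsilon\downarrow 0$), so $c\geq\sum_\theta\tau(\theta)r^0_\theta$; since $r^0\in\mathcal{R}$, also $\sum_\theta\tau(\theta)r^0_\theta\geq c$. Hence $c=\sum_\theta\tau(\theta)r^0_\theta$, and the inequality on $\mathcal{R}$ becomes $\sum_\theta\tau(\theta)R_X(q_X,\theta)\geq\sum_\theta\tau(\theta)R_X(q_X^0,\theta)$ for every $q_X\in Q_X$, which is exactly \eqref{qIsBayes} with $\tau^*=\tau$. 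Finally, the two properties cannot coexist: if $q_X^0$ minimized the $\tau^*$-weighted risk for some $\tau^*\in T$ while some $q_X^*$ satisfied \eqref{qIsBad}, then, because $\tau^*\geq 0$ with $\tau^*(\theta)>0$ for at least one $\theta$, weighting \eqref{qIsBad} by $\tau^*$ and summing yields $\sum_\theta\tau^*(\theta)R_X(q_X^*,\theta)<\sum_\theta\tau^*(\theta)R_X(q_X^0,\theta)$, contradicting minimality. The step that needs the most care is not the existence of the separating hyperplane (routine in finite dimension for disjoint convex sets) but the extraction from it of a bona fide weight function: one must argue that the normal $\tau$ is forced to be coordinatewise nonnegative and then that the separating constant is pinned precisely to $\sum_\theta\tau(\theta)r^0_\theta$; the remainder is routine manipulation of linear inequalities.
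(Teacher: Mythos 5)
Your argument is correct, but it follows a genuinely different route from the paper's. You work in the risk space $\mathbb{R}^{\Theta}$: you form the convex compact risk image $\mathcal{R}$ of $Q_X$ (legitimate, since each $R_X(\cdot,\theta)$ is linear on the polytope $Q_X$ by \eqref{RiskDef}), observe that impropriety of $q_X^0$ is exactly non-emptiness of $\mathcal{R}\cap C$ for the open improvement cone $C$, and in the disjoint case extract a weight function from a separating hyperplane. The two delicate steps you flag are handled correctly: coordinatewise nonnegativity of the normal follows from boundedness of the linear functional on $C$ (which is unbounded below in every coordinate), and the separating constant is pinned to $\sum_\theta\tau(\theta)R_X(q_X^0,\theta)$ by squeezing between $\sup_C$ and the value at $r^0\in\mathcal{R}$; the incompatibility claim also needs, and you supply, the observation that $\tau^*$ has at least one strictly positive entry. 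The paper instead sets up the bilinear auxiliary game $F(\tau,q_X)=\sum_\theta\tau(\theta)[R_X(q_X,\theta)-R_X(q_X^0,\theta)]$ on $T\times Q_X$ and invokes the minimax (duality) theorem, splitting on the sign of the game value $F(\tau^*,q_X^*)\le 0$: a strictly negative value yields the dominating strategy \eqref{qIsBad} as the minimax solution itself, and a zero value yields \eqref{qIsBayes} with the maximin weights. The approaches are duality arguments in different clothing: yours is the classical supporting-hyperplane characterization of admissible points (essentially the standard proof of \cite[Theorem 3.5]{ehrgott2005}), which gives a transparent geometric picture and produces $\tau^*$ directly as a normal vector; the paper's buys an explicit construction of both the weight function and the predominating strategy as solutions of one saddle-point problem, a device it reuses later (Theorems \ref{ThSubOptimalProcedure} and \ref{Exactness}).
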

\begin{proof} Let the function 
	$F{:}\ T \times Q_X \rightarrow \mathbb{R}$ be defined by
	\begin{equation}\nonumber 
		F( \tau, q_X)= \sum_{\theta \in \Theta} \tau(\theta)[R_X(q_X,\theta)-R_X(q_X^0,\theta)].
	\end{equation}
	Let the strategy $q_X^*$ and the weight function $\tau ^*$ be defined by
	\begin{equation} \nonumber 
		q_X^*= \argmin_{q_X \in Q_X} \max_{\tau \in T} F(\tau,q), \qquad
		\tau^*=\argmax_{\tau \in T} \min_{q_X \in Q_X}F(\tau,q_X).
	\end{equation}
	By definition (\ref{RiskDef}), the risk is a linear function of the probabilities $q_X(y'\given x)$, which form a strategy $q_X$. 
	Consequently, the function $F$ is also a linear function of~$q_X$ for any fixed weight function $\tau$. 
	With a fixed strategy $q_X$, the function $F$ is a linear function of weights $\tau (\theta)$. The set $T$ of all weight functions and the set $Q_X$ of all strategies are closed and convex. 
	Due to the well-known duality theorem \cite{borwein, boyd, hiriart}, for such functions $F$ we have
	\begin{equation} \nonumber
		\max_{\tau \in T} \min_{q_X \in Q_X}F(\tau,q_X) = F(\tau ^*,q_X^*) = \min_{q_X \in Q_X}\max_{\tau \in T} F(\tau,q_X).
	\end{equation}
	
	It is obvious that the equality $F(\tau,q_X^0)=0$ holds for every $\tau \in T$. Therefore the inequality
	$\min\limits_{q_X \in Q_X}F(\tau, q_X) \leq 0$ holds for every $\tau \in T$ and, consequently,
	\begin{equation} \nonumber
		\max_{\tau \in T} \min_{q_X \in Q_X}F(\tau,q_X) = F(\tau ^*,q_X^*) \leq 0 .
	\end{equation}
	Since the value $F(\tau ^*,q_X^*)$ is not positive, only two cases are possible: either $F(\tau ^*,q_X^*) < 0$ or
	$F(\tau ^*,q_X^*) = 0$. 
	Let us consider the case $F(\tau ^*,q_X^*) < 0$. We have the following chain:
	\begin{align*}
	F(\tau ^*, q_X^*) &= \min_{q_X \in Q_X}\max_{\tau \in T} F(\tau, q_X) 
                        = \max_{\tau \in T}F(\tau,q_X^*) =\\
			&= \max_{\tau \in T} \sum_{\theta \in \Theta} \tau (\theta)[R_X(q_X^*,\theta)-R_X(q_X^0,\theta)] 
                        = \max_{\theta \in \Theta}[R_X(q_X^*,\theta)-R_X(q_X^0,\theta)] .
	\end{align*}
		The chain results in the inequality 
			\begin{equation} \nonumber
			\max\limits_{\theta \in \Theta }[R_X(q_X^*,\theta)-R_X(q_X^0,\theta)] < 0.
				\end{equation} 
				Consequently, the inequality $R_X(q_X^*,\theta) < R_X(q_X^0,\theta)$
			holds for every model $\theta \in \Theta$ and property (\ref{qIsBad}) is proved. Note that property (\ref{qIsBayes}) does not hold in this case.
	
	Let us consider the case $F(\tau ^*,q_X^*) = 0$. We have the following chain:
	\begin{align*}
	F(\tau ^*,q_X^*) &= \max_{\tau \in T}\min_{q_X \in Q_X} F(\tau, q_X) 
                       = \min_{q_X \in Q_X} F(\tau ^*, q_X) = \\
                &= \min_{q_X \in Q_X} \sum_{\theta \in \Theta}\tau ^*(\theta)[R_X(q_X,\theta)-R_X(q_X^0,\theta)] = \\
 		        &= \min_{q_X \in Q_X}\biggl[\, \sum_{\theta \in \Theta}\tau  ^*(\theta)R_X(q_X,\theta)\biggr]-\sum_{\theta \in \Theta}\tau  ^*(\theta)R_X(q_X^0,\theta) .
	\end{align*}
	It implies the equality
	\begin{equation} \nonumber 
		\min_{q_X \in Q_X} \sum_{\theta \in \Theta}\tau  ^*(\theta)R_X(q_X,\theta) = \sum_{\theta \in \Theta}\tau  ^*(\theta)R_X(q_X^0,\theta)
	\end{equation}
	and thus property (\ref{qIsBayes}) is proved. Note that inequality (\ref{qIsBad}) does not hold in this case. 
\end{proof}

The theorem states that any reasonable strategy for complex object recognition has to minimize the weighted sum of risks and to be of the form
\begin{equation} \nonumber %
	q^*=\argmin_{q \in Q}
\sum_{\theta \in \Theta} \tau(\theta) R(q,\theta)
\end{equation}
	for some weights $\tau(\theta)$. Therefore, it has to make the decision of the form
\begin{equation} \label{BayesianStrategy} 
	y^*=\argmin_{y' \in Y}\sum_{y \in Y}\biggl[\, \sum_{\theta \in \Theta}\tau(\theta)p(x,y;\theta) \biggr] w(y,y')
\end{equation}
for a signal $x$.
For the special case of the loss function given by
\begin{equation} \label{LossFunction}
w(y,y') =
\begin{cases}
0 & \text{if}\;\; y=y' \\
1 & \text{if}\;\; y \neq y'
\end{cases} ,
\end{equation}
decision~(\ref{BayesianStrategy}) simplifies to
\begin{equation} \label{SimplifiedBayesianStrategy}
	y^*=\argmax_{y \in Y} \sum_{\theta \in \Theta}\tau(\theta) \, p(x,y;\theta) .
\end{equation}
The Theorem \ref{StrictlyBadStrategy} does not assign weight function $\tau$ that has to be used in (\ref{BayesianStrategy}) or (\ref{SimplifiedBayesianStrategy}) 
and so it does not uniquely define a strategy. 
Selecting an appropriate strategy depends on requirements that have to be satisfied in such or other applied situation. 
The theorem outlines only a strategy subset where one has to look for an appropriate strategy. 
However, the theorem results in important negative conclusion. 
It shows which strategies must not be used in any application. For some complex objects, commonly used maximum likelihood strategy that makes a decision
\begin{equation} \label{maxLikelyyhood} 
	y^*=\argmax_{y \in Y} \max_{\theta \in \Theta}p(x,y;\theta)
\end{equation}
occurs to be just such improper strategy.
%
There are quite ordinary and realistic cases when it is impossible to express the strategy (\ref{maxLikelyyhood}) in a form (\ref{SimplifiedBayesianStrategy}). 
       
\begin{example} \label{NonBayesianMLrecognition}
Let us consider a complex object with a set $\Theta=\{ 1,2 \}$ of models, a set $Y=\{ A,B \}$ of states and a set $X=\mathbb{R}^2$ of signals. 
A priori probabilities $p_Y(y=A)$ and $p_Y(y=B)$ of states are equal and do not depend on the model. 
For the fixed state and model, the signal $x = (x^{(1)}, x^{(2)})$ is a two-dimensional Gaussian random variable with independent components. 
The variances of both components $x^{(1)}$, $x^{(2)}$ are $\sigma^2=1$ and depend neither on the state nor on the model. 
Expected value $\mu (y, \theta)$ of the signal $x = (x^{(1)}, x^{(2)})$ depends both on the state and on the model so that 
$\mu(y=A, \theta = 1) = \mu(y=A, \theta = 2) = (0,0)$, $\mu(y=B, \theta = 1) = (0,1)$, $\mu(y=B, \theta = 2) = (1, 0)$. 
Probability density distributions $p_{X \given Y }(x|y; \theta)$ are shown on Figure \ref{figureAB} with circles. 

\begin{figure}[h!] 
		\center{\includegraphics[height=0.47\textwidth]{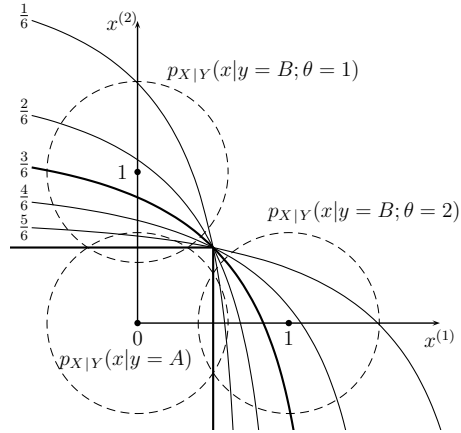}}
		\caption{ Family of different Bayesian strategies. }
		\label{figureAB}
\end{figure}

If a loss function is given by (\ref{LossFunction}) the risk $R_X(q_X, \theta)$ is a probability of a wrong decision made by strategy $q_X$ provided  $\theta $ is a true model. 
Let us assume such loss function and compare the maximum likelihood strategy (\ref{maxLikelyyhood}) with a family (\ref{SimplifiedBayesianStrategy}) of Bayesian strategies. 
Each strategy in the family (\ref{SimplifiedBayesianStrategy}) is specified by concrete values of weights $\tau(1)$ and $\tau(2)$. 

Maximum likelihood strategy (\ref{maxLikelyyhood}) makes the decision $y^* =A$ if the observed signal $x = (x^{(1)}, x^{(2)})$ belongs to the subset 
$X_A = \left\{(x^{(1)}, x^{(2)}) \in X| x^{(1)} \le 0.5, x^{(2)} \le 0.5 \right\}$ and makes the decision $y^* =B$ for signals from $X_B=X\setminus X_A$.  
The strategy is represented at Figure \ref{figureAB} with a border between $X_A$ and $X_B$. 
The border consists of two half-lines, horizontal and vertical. In a similar way, several Bayesian strategies from the family (\ref{SimplifiedBayesianStrategy}) are represented with curves. 
The strategies are built for values $ \frac{1}{6}, \frac{2}{6},\frac{3}{6},\frac{4}{6},\frac{5}{6}$ of the weight $\tau(1)$ and corresponding values of the weight $\tau(2)=1-\tau(1)$.

One can see that maximum likelihood strategy for recognition of the considered complex object differs from any Bayesian strategy and due to Theorem \ref{StrictlyBadStrategy} is improper. 
In turn, it means that some Bayesian strategy exists that strongly predominates maximum likelihood strategy for both models $\theta \in \Theta$. 
The Theorem \ref{StrictlyBadStrategy} does not say anything on how this better strategy should be selected from the family (\ref{SimplifiedBayesianStrategy}). 
However, one can guess that it is a Bayesian strategy with $\tau(1) = \tau(2) = 0.5 $. 
Really, if a true model is $\theta = 1$ the maximum likelihood strategy makes a wrong decision with probability $\approx 0.37$. 
The same is the probability of a wrong decision if $\theta = 2$. 
Bayesian strategy with  $\tau(1) = \tau(2) = 0.5 $ makes a wrong decision with probability $\approx 0.35$ both for $\theta = 1$ and for $\theta = 2$.      
\end{example}

We have excluded from consideration all obviously bad strategies, which left us only with Bayesian strategies. 
This does not mean that every Bayesian strategy predominates any non-Bayesian one. 
It only means that for each non-Bayesian strategy some better strategy exists and this better strategy is Bayesian. 
Moreover, it does not mean that each Bayesian strategy in itself is suitable for any application.
Selecting a particular strategy from the set of all Bayesian strategies depends on the requirements on the strategy. 

Perhaps, the most popular requirement in the theory and practice of complex hypothesis discrimination is the following so-called minimax requirement \cite{lehman}.  
For given sets $X$, $Y$, $\Theta$ and probabilities $p(x,y;\theta)$, $x \in X$, $y \in Y$, $\theta \in \Theta$, a strategy 
$q_X{:}\ Y \times X \rightarrow \mathbb{R}$ has to be found that minimizes~$c$ subject to the condition that
$R_X(q_X,\theta) \leq c$ for each  $\theta \in \Theta$ .
In other words, the strategy
\begin{equation} \label{MinMaxShort}
q_X^*= \argmin_{q_X \in Q_X} \max_{\theta \in \Theta}R_X(q_X,\theta)
 \end{equation}
is called a minimax strategy. Obviously, the minimax strategy is not improper and, consequently, is Bayesian. 
Indeed, if $q_X^*$ would be improper then such strategy $q_X'$ would exist that the inequalities 
\begin{equation} \nonumber 
R_X(q_X',\theta) < R_X(q_X^*,\theta), \quad \theta \in \Theta,
\end{equation}
would be valid as well as the inequality 
\begin{equation} \nonumber 
\max_{\theta \in \Theta}R_X(q_X',\theta) < \max_{\theta \in \Theta}R_X(q_X^*,\theta)
\end{equation}
that would contradict  (\ref{MinMaxShort}). Minimax strategies are fruitfully used in the theory and practice  of recognition without learning \cite{duda, webb}.

However, the minimax requirement is not the only possible meaningful one. 
In multi-criteria decision making \cite{zeleny1982} another reasonable requirement is used that differs from the minimax one. Let us follow the way shown in \cite{zeleny1982} and 
define a number $\min\limits_{q_X \in Q_X} R_X(q_X,\theta)$ for each $\theta \in \Theta$ and call it the risk of optimal strategy. 
In general, the risk of the optimal strategy depends on the model and so does the optimal strategy 
$\argmin\limits_{q_X \in Q_X} R_X(q_X,\theta)$. 
However, for some extraordinary convenient complex objects the strategy $\argmin\limits_{q_X \in Q_X} R_X(q_X,\theta)$ does not depend on the model.

\begin{definition} \label{def:optimal}
A strategy~$q_X^*$ satisfying the equality 
$R_X(q_X^*,\theta)=\min\limits_{q_X \in Q_X} R_X(q_X,\theta)$
for every model $\theta \in \Theta$ is called an overall optimal strategy.
\end{definition}
The existence of an overall optimal strategy is a rare exception rather than the rule. 
We formulate the problem of complex object recognition so that its solution is an overall optimal strategy when such strategy exists. 
Otherwise, the obtained strategy risk deviates from the optimal strategy risk as little as possible. 
More exactly,
for a given complex object $\left\langle X, Y, \Theta, p:X\times Y \times \Theta \rightarrow\mathbb{R}\right\rangle$ and a loss function $w{:}\ Y \times Y \rightarrow \mathbb{R}$, the strategy
$q_X{:}\ Y \times X \rightarrow \mathbb{R}$ has to be found that minimizes~$c$ subject to conditions
\begin{equation} \nonumber 
R_X(q_X,\theta)-\min_{q_X' \in Q}R_X(q_X',\theta) \leq c , \quad \theta \in \Theta .
\end{equation}
\begin{definition} \label{def:nearopt}
The strategy
$q_X^*=\argmin\limits_{q_X \in Q_X} \bigl[\max\limits_{\theta \in \Theta} \bigl(R_X(q_X,\theta)-\min\limits_{q_X' \in Q} R_X(q_X',\theta)\bigr)\bigr]$
is called a closest to optimal strategy.
\end{definition} 
Just as the minimax strategy the closest to optimal strategy is not improper and, consequently, is also Bayesian. 
The concept of the closest to optimal strategy is a straightforward application of the recommendations in \cite{zeleny1982} to our special case. 
On the other hand, the closest to optimal strategy is a generalization of the minimax deviation (regret) strategy from \cite{AlaizRodriguez}. 
The latter is a special case of the closest to optimal strategy when the statistical model is defined up to a priori probabilities of states.

The minimax approach~(\ref{MinMaxShort}) and the closest to optimal one~(\ref{def:nearopt}) result in different Bayesian strategies.
If the task is only recognition without learning, there is no reason to decide which one of these two approaches is better.  
However, the situation is different when the approaches are applied to recognition with learning.
In this case the minimax strategies are not suitable for recognition with learning at all. They simply ignore the learning sample. 
This defect of minimax approach has been detected by H.Robbins at the very beginning of his empirical Bayesian approach. 
This defect is considered in details in the next Section~\ref{EmpiricalBayesian}. 
Then in the Section~\ref{PatternRecognitionLearning}, we show that the closest to optimal strategies are free of this defect. 

\section{Empirical Bayesian approach and unsupervised learning}
\label{EmpiricalBayesian}
The principal property of a complex object is that its model is unknown but it is not random. It is fixed and does not change. 
The property gives rise to conjecture that recognizing a sample of independent and identically distributed objects can be performed 
better than isolated recognition of each element in the sample independently. 
The idea was researched by H.Robbins \cite{robbinsAssymptotical} decades ago and initiated an empirical Bayesian approach \cite{neyman2Breaks}. 
In fact, the same idea is the basis of an approach in pattern recognition known as unsupervised learning. 
However, these two approaches exist as if they arouse and were being developed independently of one another. 
We consider them as two different paths to the same goal and formulate a problem that may be considered as one of possible concretizations of empirical Bayesian approach and 
one of possible modifications of unsupervised learning.

H.Robbins explains his idea on the following simple example \cite{robbinsAssymptotical}, which we will use several times through the article.
An object is considered with a set $X=\mathbb{R}$ of signals, a set $Y=\{1,2\}$ of states and a set $\Theta=\{ \theta\mid 0\leq \theta \leq 1\}$ of models.
The object generates a Gaussian random signal $x \in X$, whose variance is~$1$, and whose mean equals~$1$ if the object is in state~1 and equals $(-1)$ if the object is in state~2. 
Only a priori probabilities of the states are unknown. 
They depend on the model so that the probability of state~$1$ is~$\theta$ and the probability of state~$2$ is $1-\theta$. 
In other words, the probability densities $p(x,y;\theta)$ that define the complex object are
\begin{equation} \label{RobbinsModel}
p(x,y;\theta)= \frac {\theta_y }{ \sqrt {2 \pi  } }{\rm e}^{ - \frac {1} {2} (x-\mu_y)^2 },
\end{equation}
where  $\theta_1=\theta$, $\theta_2=1-\theta$, $\mu_1=1$, $\mu_2=(-1)$. The loss function is given by~(\ref{LossFunction}). 
Let us consider optimal and minimax strategies
\begin{equation} \nonumber 
q_X^{\rm opt}(\theta)=\argmin_{q_X \in Q_X}R_X(q_X,\theta ) , \qquad
q_X^{\rm minmax}=\argmin_{q_X \in Q_X} \max_{\theta \in \Theta}  R_X(q_X,\theta)
\end{equation}
and see how the risks $R_X(q_X^{\rm opt}(\theta),\theta)$ and $R_X(q_X^{\rm minmax},\theta)$ depend on the model $\theta$ (see Figure~\ref{test1Bayesminmax}). 
Each strategy has a drawback compared to the other one. 
The strategy $q_X^{\rm opt}(\theta)$ can be used only if the model $\theta$ is known. 
The strategy $q_X^{\rm minmax}$ can be used even if the model $\theta$ is unknown, 
however the risk $R_X(q_X^{\rm minmax},\theta)$ is worse than $R_X(q_X^{\rm opt}(\theta),\theta)$ if $\theta$ is near to $1$  or $0$. 

\begin{figure}[h!] 
		\center{\includegraphics[height=0.47\textwidth]{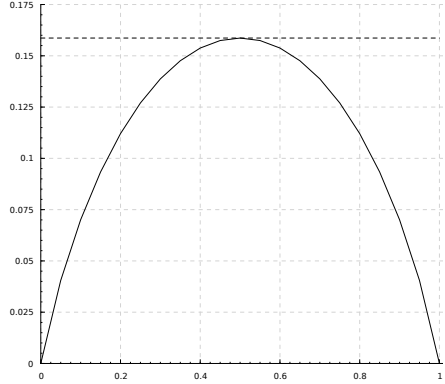}}
		\caption{The dependency of $R_X(q_X^{\rm minmax},\theta)$ on model $\theta$ is plotted with a dashed curve. 
        Probabilities $\min_{q_X \in Q_X}R_X(q_X,\theta)$ are plotted with a solid curve. }
		\label{test1Bayesminmax}
\end{figure}

Suppose that rather than a single object, a sample of independent random objects has to be recognized. 
The model $\theta$ according to which the objects are generated is unknown but fixed for the whole sample. 
H.Robbins formulates the main idea of empirical Bayesian approach that such a sample can be recognized much better than each of its elements independently.
His important negative result is that this improvement cannot be achieved in the framework of the minimax approach. Let us state this negative result exactly.

Let a complex object be defined by the sets $X=\mathbb{R}$, $Y=\{ 1,2 \}$, $\Theta=\{ \theta\mid0\leq \theta \leq 1\}$ and the probability density~(\ref{RobbinsModel}). 
Let the loss function have the form~(\ref{LossFunction}).
For this complex object and a positive integer~$n$, another complex object is defined, a so called compound complex object. 
It is characterized by sets $X^n$, $Y^n$, $\Theta$, probabilities
\begin{equation} \nonumber 
p^n(\bar{x},\bar{y};\theta)= \prod_{i=1}^{n}p(x_i,y_i;\theta) , \quad
\bar{x} \in X^n, \; \bar{y} \in Y^n, \; \theta \in \Theta,
\end{equation}
and losses
\begin{equation} \nonumber 
w^n(\bar{y},\bar{y'})=\frac{1}{n}\sum_{i=1}^nw(y_i,y_i'), \quad
\bar{y} \in Y^n, \; \bar{y'} \in Y^n.
\end{equation}
Let $q_{X^n}{:}\ Y^n \times X^n \rightarrow \mathbb{R}$ be the recognition strategy for the compound object and let
\begin{equation} \nonumber 
R_{X^n}(q_{X^n},\theta)=\sum_{\bar{y'} \in Y^n} \sum_{\bar{y} \in Y^n} \sum_{\bar{x} \in X^n} q_{X^n}(\bar{y'}\given \bar{x}) \, p^n(\bar{x},\bar{y};\theta) \, w^n(\bar{y},\bar{y'})
\end{equation}
be the risk of the strategy~$q_{X^n}$ with respect to the model~$\theta$.
The quoted negative result states that
\begin{equation} \nonumber
\min\limits_{q_{X^n}} \max\limits_{\theta \in \Theta}R_{X^n}(q_{X^n},\theta)=
\min\limits_{q_X} \max\limits_{\theta \in \Theta}R_X(q_X,\theta)
\end{equation}
for any number~$n$. It means that the quality of the strategy
\begin{equation} \nonumber
q_{X^N}^{\rm minmax}= \argmin\limits_{q_{X^n}} \max\limits_{\theta \in \Theta}R_{X^n}(q_{X^n},\theta)
\end{equation}
is not better than
\begin{equation} \nonumber
q_{X}^{\rm minmax}= \argmin\limits_{q_{X}} \max\limits_{\theta \in \Theta}R_{X}(q_{X},\theta).
\end{equation}
{\em Minimax strategies do not take advantage of the situation that multiple objects with a common model have to be recognized.\/}
We will show in Section \ref{PatternRecognitionLearning} that this drawback of minimax strategies also holds in a more general case. 

Despite of this clearly negative result, H.Robbins presents a heuristic strategy that for the objects of the form~(\ref{RobbinsModel}) 
and observed signals $(x_1,x_2, \ldots ,x_n)$ makes the decisions
\begin{equation} \label{RobbinsHeur}
y_i = 
\begin{cases}
1 & \text{if}\;\; x_i \ge \alpha \\
2 & \text{if}\;\; x_i < \alpha
\end{cases}
\quad\text{where}\quad
\alpha =  {1\over2} \ln \frac{n-\sum_{i=1}^{n}x_i} {n+\sum_{i=1}^{n}x_i} .
\end{equation}
The strategy (\ref{RobbinsHeur}) does not predominate $q_X^{\rm opt}(\theta)$ because $q_X^{\rm opt}(\theta)$ is optimal by definition. 
The strategy (\ref{RobbinsHeur}) does not predominate $q_{X^n}^{\rm minmax}$ because $q_{X^n}^{\rm minmax}$ is Bayesian and no strategy predominates it. 
However, the strategy (\ref{RobbinsHeur}) can be used even when the model $\theta$ is unknown. 
From this point of view it is better than the strategy~$q_X^{\rm opt}(\theta)$ that makes the decision
\begin{equation} \label{qOPTTheta}
y_i = 
\begin{cases}
1 & \text{if}\;\; x_i \ge \alpha \\
2 & \text{if}\;\; x_i < \alpha
\end{cases}
\quad\text{where}\quad
\alpha =  {1\over2} \ln \frac{1-\theta} {\theta} 
\end{equation} 
that depends on $\theta$.
The strategy~(\ref{RobbinsHeur}) is better than $q_{X^n}^{\rm minmax}$ in a sense that its quality improves as $n$~increases while the quality of $q_{X^n}^{\rm minmax}$ 
does not depend on $n$ and remains equal to the quality of 
$q_{X}^{\rm minmax}$. Moreover, for large enough~$n$ the strategy~(\ref{RobbinsHeur}) behaves almost 
like~$q_X^{\rm opt}(\theta)$.

The strategy~(\ref{RobbinsHeur}) illustrates the main idea of the empirical Bayesian approach \cite{neyman2Breaks, robbinsAssymptotical, robbinsEmpirical}, 
namely that an object sample can be recognized  much better 
than by recognizing each object independently. However, the example says nothing about the requirement that the strategy~(\ref{RobbinsHeur}) should be derived from. 
It only makes clear that it is not the minimax one. The problem of compound object recognition is not stated as precisely as the similar problems in the Bayesian or minimax decision theory. 
Therefore it is unclear how the approach should be formulated in general, not only for the special case~(\ref{RobbinsModel}). 

One of the possible specifications of the empirical Bayesian approach may be the following problem definition of compound complex object recognition. 
The input data of this problem consist of a quadruple 
$\left\langle X, Y, \Theta, p:X\times Y \times \Theta \rightarrow\mathbb{R}\right\rangle$ representing a complex object, 
a loss function $w{:}\ Y \times Y \rightarrow \mathbb{R}$ and a positive integer $n$.
These data define risks 
\begin{equation} \nonumber 
\min_{q_X \in Q_X}R_X(q_X,\theta)=
\min_{q_X \in Q_X}\sum_{y' \in Y} \sum_{y \in Y} \sum_{x \in X} q_X(y'\given x) \, p(x,y;\theta)\, w(y,y')
\end{equation} 
of optimal strategies, a compound object $\left\langle X^n, Y^n, \Theta, p^n:X^n\times Y^n \times \Theta \rightarrow\mathbb{R}\right\rangle$ and a loss function
$w^n{:}\ Y^n \times Y^n \rightarrow \mathbb{R}$ where
\begin{equation} \nonumber %
p^n(\bar{x},\bar{y};\theta)= \prod_{i=1}^{n}p(x_i,y_i;\theta) , \quad
\bar{x} \in X^n, \; \bar{y} \in Y^n, \; \theta \in \Theta,
\end{equation}
\begin{equation} \nonumber %
w^n(\bar{y},\bar{y'})=\frac{1}{n}\sum_{i=1}^n w(y_i,y_i'), \quad
\bar{y} \in Y^n , \; \bar{y'}\in Y^n.
\end{equation}
Let $q_{X^n}{:}\ Y^n \times X^n \rightarrow \mathbb{R}$ be a randomized strategy that makes a decision about hidden states $y_1, y_2, \cdots , y_n$ 
on the base of observed signals $y_1, y_2, \cdots , y_n$. 
Let $Q_{X^n}$ be the set of all such strategies. For $q_{X^n} \in Q_{X^n}$ and model $\theta \in \Theta$, the risk is a number
\begin{equation} \nonumber %
R_{X^n}(q_{X^n},\theta)=\sum_{\bar{y'} \in Y^n} \sum_{\bar{y} \in Y^n} \sum_{\bar{x} \in X^n} q_{X^n}(\bar{y'}\given \bar{x}) \, p^n(\bar{x},\bar{y};\theta) \, w^n(\bar{y},\bar{y'}).
\end{equation}
The problem of compound complex object recognition is defined so that for a quadruple 
\begin{equation} \nonumber %
\left\langle X, Y, \Theta, p:X\times Y \times \Theta \rightarrow\mathbb{R}\right\rangle,
\end{equation}
a loss function $w{:}\ Y \times Y \rightarrow \mathbb{R}$ and a positive integer $n$ the closest to optimal strategy
\begin{equation} \label{TheMainFormulation}
q_{X^n}^*= \argmin_{q_{X^n} \in Q_{X^n}} \max_{\theta \in \Theta} \bigl[ R_{X^n}(q_{X^n},\theta)-\min_{q_X \in Q_X}R_X(q_X,\theta) \bigr]
\end{equation}
has to be found. Obviously, the solution $q_{X^n}^*$ to this problem is a Bayesian strategy. 


Widely known unsupervised learning is another approach that copes with recognizing compound complex objects. It proceeds as follows. 
First, for given sets~$X$, $Y$, $\Theta$, probabilities $p(x,y;\theta)$, $x \in X$, $y \in Y$, $\theta \in \Theta$, and a sample $\bar x =(x_1,x_2, \ldots ,x_n)$ 
the maximum likelihood estimate
\begin{equation} \label{MaximalLikelyhoodModel}
\theta^{\rm ML} = \argmax_{\theta \in \Theta} \prod_{i=1}^n \sum_{y \in Y} p(x_i,y;\theta)=\argmax_{\theta \in \Theta} \sum_{i=1}^n \log \sum_{y \in Y} p(x_i,y;\theta)
\end{equation}
is found. Then, the elements of the sample are recognized using the strategy
\begin{equation} \label{MaximalLikelyhoodStrategy}
q_X^{\rm ML} = \argmin_{q_X \in Q_X} R_X(q_X,\theta^{\rm ML})
\end{equation}
as if the obtained model $\theta^{\rm ML}$ would be a true model. 

The maximum likelihood estimation~(\ref{MaximalLikelyhoodModel}) is of the main interest here because it is not trivial even in the simplest cases.
To solve~(\ref{MaximalLikelyhoodModel}), the algorithms~\cite{schlezEM} are used, which later became widely known 
as the EM-algorithms~\cite{dempsterEM}. The problem (\ref{MaximalLikelyhoodModel}) is significant in its own right.
Nevertheless, for some complex objects the strategy~(\ref{MaximalLikelyhoodModel}, \ref{MaximalLikelyhoodStrategy})
is improper even though each strategy of the family 
\begin{equation} \label{LokBayesianStrategy}
q_X^* = \argmin_{q_X \in Q_X} R_X(q_X,\theta^*), \quad \theta^* \in \Theta, 
\end{equation}
is Bayesian. Each strategy of the family (\ref{LokBayesianStrategy}) differs from the strategy (\ref{MaximalLikelyhoodModel}), (\ref{MaximalLikelyhoodStrategy}) 
at least due to the fact that their formats differ from one another, the first being $Y \times X \rightarrow \mathbb{R}$ and the second $Y^n \times X^n \rightarrow \mathbb{R}$. 
However, even if $n=1$ and their formats become identical, some complex objects exist such that  
the strategy (\ref{MaximalLikelyhoodModel}), (\ref{MaximalLikelyhoodStrategy}) differs from each strategy of the family (\ref{LokBayesianStrategy}). 
Moreover, it differs from every Bayesian strategy.

\begin{example} \label{NonBayesianUnsupervised}
\begin{figure}[h!] 
		\center{\includegraphics[height=0.47\textwidth]{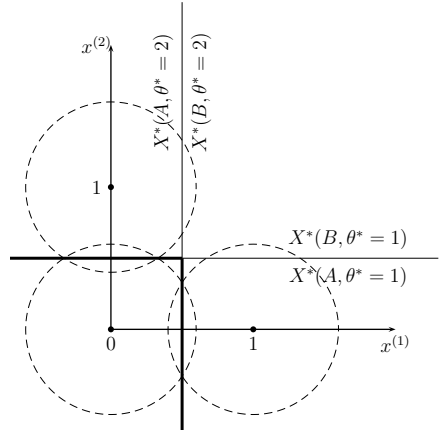}}
		\caption{ Maximum likelihood strategy. }
		\label{figureABNonBayes}
\end{figure}
Let us consider a complex object and a loss function as in the Example \ref{NonBayesianMLrecognition} and compare the unsupervised learning strategy (\ref{MaximalLikelyhoodModel}), 
(\ref{MaximalLikelyhoodStrategy}) with strategies of the family (\ref{LokBayesianStrategy}) for the simplest case when $n=1$. 
For a given signal $x=(x^{(1)},x^{(2)})$ the strategy (\ref{LokBayesianStrategy}) makes a decision 
\begin{equation} \label{LokBayesianDecisionEx}
y^*=\argmax_{y \in Y}p(x^{(1)},x^{(2)},y; \theta^*),
\end{equation}
that depends on $\theta^*$. For $\theta^* =1 $ the strategy $q_X^*$ divides the plane $X=\mathbb{R}^2$ into two half-planes
$X^*(A,\theta^*=1)$ and $X^*(B,\theta^*=1)$ with a horizontal line with coordinate $x^{(2)}=0.5$. 
For $\theta^* =2 $ it divides $X$ into $X^*(A,\theta^*=2)$ and $X^*(B,\theta^*=2)$ with a vertical line with coordinate $x^{(1)}=0.5$ (see Figure \ref{figureABNonBayes}).
The strategy (\ref{MaximalLikelyhoodModel}), (\ref{MaximalLikelyhoodStrategy}) makes a decision
\begin{equation} \label{UnsupervisedDecisionEx}
y^{\rm ML}=\argmax_{y \in Y}p(x^{(1)},x^{(2)},y; \argmax_{\theta \in \Theta}\sum_{y \in Y} p(x^{(1)},x^{(2)},y;\theta))
\end{equation}
that does not depend on $\theta$. 
This decision is $A$ on subset 
\begin{equation} \nonumber
\{ (x^{(1)},x^{(2)}) \in X | x^{(1)} \le 0.5, x^{(2)} \le 0.5 \} = X^*(A,\theta^*=1) \cap X^*(A,\theta^*=2)
\end{equation}
that differs both from $X^*(A,\theta^*=1)$ and from $X^*(A,\theta^*=2)$. 
Therefore, the strategy (\ref{MaximalLikelyhoodModel}), (\ref{MaximalLikelyhoodStrategy}) differs both from $\argmin\limits_{q_X \in Q_X} R_X(q_X,\theta^*=1)$ 
and from  $\argmin\limits_{q_X \in Q_X} R_X(q_X,\theta^*=2)$. 
It has already been shown in Example \ref{NonBayesianMLrecognition} that the strategy differs from any Bayesian strategy.
\end{example}

The heuristic strategy~(\ref{RobbinsHeur}) by H.Robbins, the closest to optimal strategy~(\ref{TheMainFormulation}) and the unsupervised learning strategy~(\ref{MaximalLikelyhoodModel}, \ref{MaximalLikelyhoodStrategy})
are three different strategies. 
They are compared experimentally for the complex model~(\ref{RobbinsModel}) in Example~\ref{RobbinsExample} in Section~\ref{Testing}.
In this experiment, {\em the closest to optimal strategy yields the best result, the unsupervised learning strategy the worst, and the heuristic strategy an intermediate one.\/}
%

\section{Recognition with learning}
\label{PatternRecognitionLearning}
As in the previous sections, we consider complex objects,  defined by a quadruple \\
$\left\langle X, Y, \Theta, p:X\times Y \times \Theta \rightarrow\mathbb{R}\right\rangle$.
In addition, a certain source of data is considered. The source generates the so called learning information $z$ that belongs to a finite set $Z$. 
The learning information~$z$ is random and depends on the model $\theta \in \Theta$, so that the probability $p(z;\theta)$ is defined for every $z \in Z$ and $\theta \in \Theta$. 
It is crucial that for a fixed model~$\theta$, the learning information~$z$  depends neither on the current state~$y$ nor on the current signal~$x$ of the object, 
so that $p(z,x,y;\theta)=p(z;\theta) p(x,y;\theta)$. Formally, the source of learning information is defined with a triple 
$\left\langle Z, \Theta, p:Z \times \Theta \rightarrow \mathbb{R} \right\rangle$. 

Usually, learning information is obtained in a following way.  
The object is put into a special condition so that even its state becomes  observable. 
Under this condition, a sample $(x_1,y_1; x_2,y_2; \ldots \; x_n,y_n)$  of~$n$ independent random pairs $(x,y)$ 
is observed so that $Z=(X \times Y)^n$ and
\begin{equation} \nonumber %
p(z;\theta)=p(x_1,y_1, x_2,y_2, \ldots , x_n,y_n;\theta)=\prod_{i=1}^n p(x_i,y_i;\theta).
\end{equation}
Our formulation is more general. In case of unsupervised learning, the learning information~$z$ is an element of $X^n$. 
When the learning information is absent, we introduce a special element $0$ and define $Z=\{0 \}$ and $p(z=0;\theta)=1$ for every model $\theta \in \Theta$.
The learning information can have a different nature, for example, it can be an expert opinion about the model.
Moreover, the learning information can come from several independent sources simultaneously as it is mentioned in \cite{Kittler98oncombining}. 
In this  case, the set $Z_s$ and the probabilities $p(z;\theta,s)$, $z \in Z_s$, $\theta \in \Theta$, $s \in S$, are given for every source $s \in S$.

The problem of recognition with learning means 
that for a given complex object and a learning information source a strategy has to be developed that makes a decision about current object state based both on the 
learning information and on the current signal generated by the object. We express this statement in the form
\begin{eqnarray} 
& \left\langle X, Y, \Theta, p:X\times Y \times \Theta \rightarrow\mathbb{R}\right\rangle, \left\langle Z, \Theta, p:Z \times \Theta \rightarrow \mathbb{R} \right\rangle  \notag \label{WithLearning}\\
& \downarrow \\
& q_{X \times Z}:Y \times X \times Z \rightarrow \mathbb{R}. \notag
\end{eqnarray}

The value $q_{X \times Z}(y'|x,z)$ of the strategy $q_{X \times Z}$ is a conditional probability of the decision "a state of an object is $y'$" 
under condition that a signal $x$ is observed and learning information $z$ is available. The form
\begin{eqnarray} 
& \left\langle X, Y, \Theta, p:X\times Y \times \Theta \rightarrow\mathbb{R}\right\rangle \notag \\  
& \downarrow \label{WithoutLearning} \\
& q_{X}:Y \times X \rightarrow \mathbb{R}. \notag
\end{eqnarray} 
expresses the problem of recognition without learning. One can see that the form (\ref{WithLearning}) is a special case of (\ref{WithoutLearning}). Really, (\ref{WithLearning}) can be equivalently expressed as 
\begin{eqnarray}
& \left\langle X^*, Y, \Theta, p^*:X^*\times Y \times \Theta \rightarrow\mathbb{R}\right\rangle  \notag \\
& \downarrow \notag \\
& q_{X^*}:Y \times X^* \rightarrow \mathbb{R} \notag
\end{eqnarray} 
with $X^*=X \times Z$ and $p^*(x^*,y; \theta)=p^*(x,z,y;\theta)=p(z;\theta)\cdot p(x,y;\theta)$. 
In other words, a complex object with a source of learning information can be replaced with another object that generates specific signals composed of two components. 
The first component depends both on the current state of the object and on the model. 
The second one depends directly only on the model and under a fixed model depends neither on the current state nor on the first component.

Theorem~\ref{StrictlyBadStrategy}, proved in Section~\ref{SecComplexHypothesis} in the general form, remains valid for recognition with learning. 
However, for subsequent considerations another formulation of the theorem is more convenient.
A strategy $q_{X \times Z}:Y \times X \times Z \rightarrow \mathbb{R}$ will be represented as a superposition of two functions: 
a learning procedure $g_Z{:}\  Z \rightarrow Q$ and a recognition strategy $ q_X{:}\ Y \times X \rightarrow \mathbb{R}$. 
Given learning information~$z$, the learning procedure~$g_Z$ defines the recognition strategy $g_Z(z){:}\ Y \times X \rightarrow R$.
This means that after learning with random information~$z$ and observing a signal~$x$, the decision "the object is in state~$y'$" is made with probability $g_Z(z)(y'\given x)$. 
Let $G_Z$~be the set of all possible learning procedures of the form $Z \rightarrow Q$.
 
As before, $R_X(q_X,\theta)$ is the risk of recognition strategy $q_X \in Q_X$ with respect to the model $\theta \in \Theta$. 
The number $R_X(g_Z(z),\theta)$ is the risk of the strategy $g_Z(z)$, obtained by applying the learning procedure $g_Z{:}\  Z \rightarrow Q$ to the learning information $z \in Z$. 
The risk $R_X(g_Z(z),\theta)$ is random because it depends on the random information~$z$. Let $R_Z(g_Z,\theta)$ denote the expectation of the risk over the set of all learning informations,
\begin{equation} \nonumber 
R_Z(g_Z,\theta)= \sum_{z \in Z} p(z;\theta)R_X(g_Z(z),\theta).
\end{equation}
\begin{definition}
A learning procedure~$g_Z^*$ is called Bayesian if a weight function $\tau{:}\  \Theta \rightarrow \mathbb{R}$ exists such that
\begin{equation} \nonumber 
g_Z^* = \argmin_{g_Z \in G_Z} \sum_{\theta \in \Theta} \tau(\theta)R_Z(g_Z,\theta).
\end{equation}
\end{definition}
Now the restriction of Theorem~\ref{StrictlyBadStrategy} 
to learning procedures reads as follows.
\begin{theorem} \label{StrictlyBadProcedure}
	For every learning procedure $g_Z^0 \in G$, either a weight function $\tau ^* \in T$ exists such that
	\begin{equation} \nonumber 
		g_Z^0=\argmin_{g_Z \in G_Z}\sum_{\theta \in \Theta}\tau ^*(\theta) R_Z(g_Z,\theta)
	\end{equation}
	or a learning procedure~$g_Z^*$ exists such that for every model $\theta \in \Theta$ the inequality
	\begin{equation} \nonumber 
		R_Z(g_Z^*,\theta)<R_Z(g_Z^0,\theta)\end{equation}
		holds. These two properties of learning procedures are incompatible.
\end{theorem}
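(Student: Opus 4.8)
The plan is to deduce Theorem~\ref{StrictlyBadProcedure} from Theorem~\ref{StrictlyBadStrategy} by applying the latter to the augmented complex object $\left\langle X^*, Y, \Theta, p^*\right\rangle$ already introduced above, with $X^* = X \times Z$ and $p^*(x,z,y;\theta) = p(z;\theta)\,p(x,y;\theta)$, keeping the same loss function $w$. The essential observation is that learning procedures for the original object are in natural bijection with ordinary recognition strategies for the augmented object, and that this bijection identifies the two risk functionals $R_Z(\cdot,\theta)$ and $R_{X^*}(\cdot,\theta)$. Once that is established, every clause of Theorem~\ref{StrictlyBadStrategy} transcribes verbatim into the corresponding clause for learning procedures.

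First I would make the correspondence explicit: to a learning procedure $g_Z \in G_Z$ associate the function $q_{X^*}$ defined by $q_{X^*}(y' \given x, z) = g_Z(z)(y'\given x)$. Since for each $z$ the value $g_Z(z)$ is a strategy in $Q_X$, the function $q_{X^*}$ is nonnegative and satisfies $\sum_{y'\in Y} q_{X^*}(y'\given x,z) = 1$ for every $(x,z) \in X^*$, so $q_{X^*} \in Q_{X^*}$; conversely, every $q_{X^*} \in Q_{X^*}$ arises in this way from a unique $g_Z \in G_Z$, because the defining constraints of $Q_{X^*}$ are exactly the per-$z$ constraints defining $G_Z$. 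Next I would check that this bijection preserves risk: substituting $p^*(x,z,y;\theta) = p(z;\theta)\,p(x,y;\theta)$ into the definition~(\ref{RiskDef}) of $R_{X^*}(q_{X^*},\theta)$ and summing first over $z$ gives
\begin{align*}
R_{X^*}(q_{X^*},\theta) &= \sum_{z \in Z} p(z;\theta) \sum_{x \in X}\sum_{y' \in Y}\sum_{y \in Y} g_Z(z)(y'\given x)\,p(x,y;\theta)\,w(y,y') \\
&= \sum_{z \in Z} p(z;\theta)\,R_X(g_Z(z),\theta) = R_Z(g_Z,\theta).
\end{align*}

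With these two facts in hand the theorem is immediate. Applying Theorem~\ref{StrictlyBadStrategy} to the object $\left\langle X^*, Y, \Theta, p^*\right\rangle$ and to the strategy $q_{X^*}^0$ corresponding to the given $g_Z^0$: either a weight function $\tau^* \in T$ exists with $q_{X^*}^0 = \argmin_{q_{X^*}\in Q_{X^*}} \sum_{\theta \in \Theta} \tau^*(\theta) R_{X^*}(q_{X^*},\theta)$, which under the bijection reads $g_Z^0 = \argmin_{g_Z \in G_Z}\sum_{\theta \in \Theta} \tau^*(\theta) R_Z(g_Z,\theta)$; or a strategy $q_{X^*}^*$ exists with $R_{X^*}(q_{X^*}^*,\theta) < R_{X^*}(q_{X^*}^0,\theta)$ for every $\theta \in \Theta$, which reads $R_Z(g_Z^*,\theta) < R_Z(g_Z^0,\theta)$ for every $\theta \in \Theta$ for the corresponding $g_Z^*$. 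Incompatibility of the two alternatives is inherited directly from Theorem~\ref{StrictlyBadStrategy}.

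I expect no genuine obstacle; the only points demanding care are the routine verifications that $G_Z$ corresponds bijectively to $Q_{X^*}$ and that the summation may be reordered to extract the factor $p(z;\theta)$. Alternatively, one can give a self-contained proof that copies the minimax/duality argument of Theorem~\ref{StrictlyBadStrategy} with $F(\tau,g_Z) = \sum_{\theta\in\Theta}\tau(\theta)\bigl[R_Z(g_Z,\theta) - R_Z(g_Z^0,\theta)\bigr]$; this needs only that $R_Z(\cdot,\theta)$ is linear in $g_Z$ (clear, since $R_X(g_Z(z),\theta)$ is linear in $g_Z(z)$ and $R_Z$ is a nonnegative combination of these over $z$) and that $G_Z$ is closed and convex (a convex combination of learning procedures is again a learning procedure), after which the same two-case analysis of the sign of $F(\tau^*,g_Z^*)$ goes through unchanged.
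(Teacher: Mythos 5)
Your proposal is correct and is exactly the route the paper intends: the paper does not write out a proof of Theorem~\ref{StrictlyBadProcedure} but justifies it precisely by the reduction to recognition without learning via the augmented object $X^*=X\times Z$ with $p^*(x,z,y;\theta)=p(z;\theta)\,p(x,y;\theta)$, so that it becomes "the restriction of Theorem~\ref{StrictlyBadStrategy} to learning procedures." Your explicit verification of the bijection $G_Z\leftrightarrow Q_{X^*}$ and of the risk identity $R_{X^*}(q_{X^*},\theta)=R_Z(g_Z,\theta)$ supplies the details the paper leaves implicit, and both are correct.
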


The theorem says that every learning procedure is either Bayesian or improper. In the special case when a sample
 $( x_1,y_1; x_2,y_2; \ldots; x_n,y_n) $ is the learning information  and a signal $x_0$ is observed, the decision~$y^*$ about the current object state has to be of the form
 \begin{equation} \label{CorrectDecision}
y^*= \argmin_{y' \in Y} \sum_{y_0 \in Y} \sum_{\theta \in \Theta} \tau(\theta)\prod_{i=0}^n p(x_i,y_i;\theta) w(y_0,y')
\end{equation}
with some weights $\tau(\theta)$, $\theta \in \Theta$. For the loss function~(\ref{LossFunction}), the strategy is simplified to
\begin{equation} \nonumber 
y^*= \argmax_{y_0 \in Y}  \sum_{\theta \in \Theta} \tau(\theta)\prod_{i=0}^n p(x_i,y_i;\theta).
\end{equation}  
Strategies that are commonly used in pattern recognition have other form. 
The most widely known is the method based on maximum likelihood estimation \cite{duda, webb}. According to this method, the model
\begin{equation} \label{SupervisedLearning1}
\theta^{\rm ML}=\argmax_{\theta \in \Theta}\prod_{i=1}^n p(x_i,y_i;\theta)
\end{equation} 
is found and then for an observed signal~$x$ the decision     
\begin{equation} \label{SupervisedLearning2}
y^{\rm ML}=\argmin_{y' \in Y} \sum_{y \in Y} p(x,y;\theta^{\rm ML}) w(y,y')
\end{equation} 
is made, as if the maximum likelihood model was identical to the true one. For some complex objects, the strategy~(\ref{SupervisedLearning1}, \ref{SupervisedLearning2})
differs from~(\ref{CorrectDecision}) and thus it is improper. One of such complex objects was considered in Examples 
\ref{NonBayesianMLrecognition} and \ref{NonBayesianUnsupervised}. The following example shows more sharply the main defect of maximum likelihood learning (\ref{SupervisedLearning1}), (\ref{SupervisedLearning2}). 

\begin{example} \label{MLRobbins}
Let us return to complex object 
$\langle  X,Y,\Theta, p:X \times Y \times \Theta \rightarrow \mathbb{R} \rangle$ described in a Section \ref{EmpiricalBayesian} with sets $X=\mathbb{R}$, $Y=\{ 1,2 \}$, $\Theta = \{ \theta| 0 \le \theta \le 1 \}$ and probability density distribution $p: X \times Y \times \Theta \rightarrow \mathbb{}R$ of the form (\ref{RobbinsModel}). The loss function is given by~(\ref{LossFunction}).
If no learning sample is available one can use a minimax recognition strategy that does not depend on a model. The strategy decides that $y=1$ if $x \ge 0$ and $y=2$ if $x < 0$. The strategy makes a wrong decision with probability $\approx 0.16$ independently of a model and seems to be quite reasonable. 

Suppose now that the learning sample is obtained that consists of only pair $(x_0, y_0)$. It is very small sample but one can expect that it enables at least small improvement as compared with minimax strategy that works if no learning sample is available. However, the maximum likelihood learning (\ref{SupervisedLearning1}), (\ref{SupervisedLearning2}) results in a strategy that ignores a current signal $x$ at all and decides that the object is in the state $y_0$ independently of signal generated by object. So, if the learning sample is of very small size it is reasonable to ignore it rather than to make use of it with maximum likelihood learning.     
\end{example}\\
Nevertheless, the maximum likelihood learning~(\ref{SupervisedLearning1}, \ref{SupervisedLearning2})
has an important positive property that can be called asymptotic optimality. 
Asymptotic optimality qualifies a learning procedure in a similar sense as consistency qualifies a parameter estimate.
We formulate and prove this property after several preliminary remarks. 

First, we assume that for every two different models~$\theta_1$ and~$\theta_2$ a pair $(x,y) \in X \times Y$ exists such that $p(x,y;\theta_1) \neq p(x,y;\theta_2)$. 
Obviously, this assumption is not a restriction. 
Really, if there are two models $\theta_1 $ and $\theta_2 $ such that $p(x,y;\theta_1) = p(x,y;\theta_2)$ for each pair $(x,y) \in X\times Y$ they can be treated as equivalent and be replaced with one model.

Then, we rely on properties of entropy-like functions $\sum\limits_{i=1}^n a_i \log x_i$ of $2n$ strictly positive arguments such that $\sum\limits_{i=1}^n a_i=\sum\limits_{i=1}^n x_i=1$. 
However, we extrapolate such functions onto the set of non-negative numbers assuming that $a\log x = 0$ if $a=x=0$ and $a \log x = -\infty $ if $a > 0$ and $x=0$. Under these assumptions the difference
\begin{equation} \nonumber 
\sum_{x \in X} \sum_{y \in Y} p(x,y;\theta^*) \log p(x,y; \theta')
-\sum_{x \in X} \sum_{y \in Y} p(x,y;\theta^*) \log p(x,y; \theta^*)
\end{equation} 
is defined for every pair $(\theta^*, \theta')$ and is strictly negative if  $\theta^* \neq \theta'$. The statement results from the known property of entropy-like functions that  
\begin{equation} \nonumber 
\max_{x_1} \max_{x_2} \ldots \max_{x_n} \sum\limits_{i=1}^n a_i \log x_i = \sum\limits_{i=1}^n a_i \log a_i  
\end{equation}  
and if $x_{i^*} \neq a_{i^*}$ for some $i^*$ then $\sum\limits_{i=1}^n a_i \log a_i > \sum\limits_{i=1}^n a_i \log x_i$.

At last, we use a fundamental relation between the averaged value $\frac{1}{n}\sum\limits_{i=1}^nx_i$ and the expected value 
$\sum\limits_{x \in X}x \cdot p(x)$. 
The relation restricted to our special case has the following formulation.
Let $n$ be a positive number, $X$ be a finite set of numbers, $p(x)$ be the probability of $x \in X$ and
$Z_n=\{z \in X^n|\frac{1}{n}\sum\limits_{i=1}^nx_i \ge 0\}$. If $\sum\limits_{x \in X}x \cdot p(x) < 0$ then
\begin{equation} \label{Bernulli}
\lim\limits_{n \rightarrow \infty}\sum\limits_{z \in Z_n}\prod\limits_{i=1}^np(x_i) =0.
\end{equation}
\begin{theorem} \label{MaxLikeConsistency}
Let $Z=(X \times Y)^n$ and the procedure $g_n^{\rm ML}{:}(X \times Y)^n \rightarrow Q$ be defined by
\begin{equation} \label{MaxLikeProcedure} 
g_n^{\rm ML}(z)=\argmin_{q_X \in Q_X}R_X(q_X, \argmax_{\theta \in \Theta}p(z;\theta)), \quad z \in (X \times Y)^n .
\end{equation}
Then for every $\theta^* \in \Theta$
\begin{equation} \label{BernulliMain} 
\lim_{n \rightarrow \infty}[R_Z(g_n^{\rm ML},\theta^*) - \min_{q_X \in Q_X}R_X(q_X,\theta^*) ] = 0. 
\end{equation}
\end{theorem}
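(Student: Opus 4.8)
The plan is to control the excess risk $R_Z(g_n^{\rm ML},\theta^*)-\min_{q_X\in Q_X}R_X(q_X,\theta^*)$ by the probability that the maximum likelihood estimate $\widehat\theta(z)=\argmax_{\theta\in\Theta}p(z;\theta)$ differs from the true model $\theta^*$, and then to show this probability vanishes as $n\to\infty$, i.e.\ that the ML estimate is consistent. For the first part, note that $g_n^{\rm ML}(z)$ is a strategy in $Q_X$, so $R_X(g_n^{\rm ML}(z),\theta^*)\ge\min_{q_X\in Q_X}R_X(q_X,\theta^*)$ for every $z$; and since $X$, $Y$ are finite and the risk is a convex combination of the values of $w$, every $R_X(q_X,\theta^*)$ lies between $\min_{y,y'}w(y,y')$ and $\max_{y,y'}w(y,y')$, so $R_X(g_n^{\rm ML}(z),\theta^*)-\min_{q_X\in Q_X}R_X(q_X,\theta^*)\le C$ with $C$ depending on $w$ only. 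When $\widehat\theta(z)=\theta^*$ this difference is exactly $0$, because $g_n^{\rm ML}(z)$ is then an optimal strategy for $\theta^*$. Writing $R_Z(g_n^{\rm ML},\theta^*)-\min_{q_X\in Q_X}R_X(q_X,\theta^*)=\sum_z p(z;\theta^*)\bigl[R_X(g_n^{\rm ML}(z),\theta^*)-\min_{q_X\in Q_X}R_X(q_X,\theta^*)\bigr]$ and discarding the null terms gives
\begin{equation} \nonumber
0\le R_Z(g_n^{\rm ML},\theta^*)-\min_{q_X\in Q_X}R_X(q_X,\theta^*)\le C\sum_{z:\,\widehat\theta(z)\neq\theta^*}p(z;\theta^*).
\end{equation}
Ties in the $\argmax$ defining $\widehat\theta$ can be broken by any fixed rule; the $z$ at which a tie with $\theta^*$ occurs are absorbed into $\{\widehat\theta(z)\neq\theta^*\}$.

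It then remains to show $\sum_{z:\,\widehat\theta(z)\neq\theta^*}p(z;\theta^*)\to 0$. Since $\Theta$ is finite, $\{z:\widehat\theta(z)\neq\theta^*\}\subseteq\bigcup_{\theta'\neq\theta^*}\{z:p(z;\theta')\ge p(z;\theta^*)\}$, so it suffices to prove $\sum_{z:\,p(z;\theta')\ge p(z;\theta^*)}p(z;\theta^*)\to 0$ for each fixed $\theta'\neq\theta^*$ and then sum the finitely many terms. Because $p(z;\theta)=\prod_{i=1}^n p(x_i,y_i;\theta)$, the event $p(z;\theta')\ge p(z;\theta^*)$ is the event $\tfrac1n\sum_{i=1}^n\xi_i\ge 0$, where $\xi_i=\log p(x_i,y_i;\theta')-\log p(x_i,y_i;\theta^*)$ are i.i.d.\ under $\theta^*$ with values in the finite set $\{\log p(x,y;\theta')-\log p(x,y;\theta^*):(x,y)\in X\times Y\}$; the convention $\log 0=-\infty$ only makes this event less likely when some relevant $p(x,y;\theta')$ vanishes, so that sub-case is handled separately. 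By the entropy inequality recalled before the theorem, $E_{\theta^*}\xi_1=\sum_{x,y}p(x,y;\theta^*)\log p(x,y;\theta')-\sum_{x,y}p(x,y;\theta^*)\log p(x,y;\theta^*)<0$, where strict negativity uses the standing assumption that $\theta'$ and $\theta^*$ induce different distributions on $X\times Y$. Applying relation (\ref{Bernulli}) to the finite-valued variable $\xi_1$ yields $\sum_{z:\,\frac1n\sum_i\xi_i\ge 0}\prod_{i}p(\xi_i)\to 0$, which is exactly what is needed; combined with the displayed bound this gives (\ref{BernulliMain}).

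The genuinely routine part is the bookkeeping of the first paragraph. The points that need care are the boundary phenomena — ties in $\argmax_\theta p(z;\theta)$ and the $\log 0$ convention — and the observation that the standing no-duplicate-models assumption is precisely what upgrades $E_{\theta^*}\xi_1\le 0$ to a strict inequality; beyond that the argument is the law of large numbers in the form (\ref{Bernulli}) together with the Gibbs inequality stated just before the theorem.
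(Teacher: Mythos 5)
Your argument is correct and follows essentially the same route as the paper's own proof: you reduce the excess risk to $(\max w-\min w)\cdot\prob(\widehat\theta(z)\neq\theta^*)$, cover the event $\{\widehat\theta(z)\neq\theta^*\}$ by the union of the events $\{p(z;\theta')\ge p(z;\theta^*)\}$ over $\theta'\neq\theta^*$, and kill each one via the Gibbs inequality plus the law-of-large-numbers relation (\ref{Bernulli}). The only difference is presentational (you do the risk bound first and consistency second, and you flag the tie-breaking and $\log 0$ boundary cases explicitly, which the paper passes over silently).
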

\begin{proof}
Let us choose an arbitrary model $\theta^*$ and fix it for the whole proof. 
Let us denote
\begin{equation} \nonumber 
\theta^{\rm ML}(z) = \argmax_{\theta \in \Theta} p(z;\theta)= \argmax_{\theta \in \Theta}\prod_{i=1}^n p(x_i,y_i;\theta). 
\end{equation} 
The model $\theta^{\rm ML}(z)$ is random because it depends on a random sample $z$. If the probability of the sample $z$ is
\begin{equation} \nonumber 
p(z;\theta^*)=p(x_1,y_1, x_2,y_2, \ldots x_n,y_n; \theta^*) = \prod_{i=1}^n p(x_i,y_i;\theta^*)
\end{equation}
then inequality $\theta^{\rm ML}(z) \neq \theta^* $ is a random event with its probability depending on $n$. 
We prove that the probability of inequality $\theta^{\rm ML}(z) \neq \theta^* $ converges to zero when $n$ increases.
Let us define three subsets of the set $Z=(X \times Y)^n$:
\begin{equation} \nonumber
Z(\theta^*,\theta',n)=\{ z|p(z;\theta') \ge p(z;\theta^*)  \}, \quad 
Z(\theta^*,n)=\bigcup_{\theta' \in \Theta\setminus \{\theta^* \} }Z(\theta^*,\theta',n),
\end{equation}
\begin{equation} \nonumber
Z^{\rm ML}(\theta^*,n)=\{ z|\theta^{\rm ML}(z) \neq \theta^*   \}
\end{equation}
and denote $P(Z') = \sum\limits_{z \in Z'}p(z;\theta^*)$ for each $Z' \subset Z$. Evidently,
\begin{equation} \nonumber
Z^{\rm ML}(\theta^*,n) \subset Z(\theta^*,n)=\bigcup_{\theta' \in \Theta\setminus \{\theta^* \} }Z(\theta^*,\theta',n)
\end{equation}
that results in
\begin{equation} \label{Inequalities}
P(Z^{\rm ML}(\theta^*,n)) \le P(Z(\theta^*,n)) \le \sum_{\theta' \in \Theta\setminus \{\theta^* \} }P(Z(\theta^*,\theta',n)).
\end{equation}
The subset $Z(\theta^*,\theta',n)$ consists of all samples $z=(x_1,y_1, x_2,y_2, \ldots x_n,y_n)$ that satisfy the inequality 
\begin{equation} \nonumber 
\frac{1}{n}\sum_{i=1}^n[\log p(x_i,y_i;\theta')-\log p(x_i,y_i;\theta^*)] \ge 0.
\end{equation}
The left side of the inequality is the average of $n$ independent and identically distributed random numbers 
$\log p(x,y;\theta')-\log p(x,y;\theta^*)$. The mean value  
\begin{equation} \nonumber 
\sum_{x \in X} \sum_{y \in Y} p(x,y;\theta^*) \log p(x,y; \theta')
-\sum_{x \in X} \sum_{y \in Y} p(x,y;\theta^*) \log p(x,y; \theta^*)
\end{equation} 
of these numbers is negative and therefore $\lim\limits_{n \rightarrow \infty}P(Z(\theta^*,\theta',n)=0$ due to (\ref{Bernulli}) and\\ 
$\lim\limits_{n \rightarrow \infty}P(Z^{\rm ML}(\theta^*,n))~=~0$ due to (\ref{Inequalities}).
The proof is concluded by the chain
\begin{multline*}
\lim_{n \rightarrow \infty}[R_Z(g_n^{\rm ML},\theta^*) - \min_{q_X \in Q_X}R_X(q_X,\theta^*) ] = \\
\begin{aligned}
&=\lim_{n \rightarrow \infty}\sum_{z \in (X \times Y)^n}p(z;\theta^*)
[R_X(g_n^{\rm ML}(z),\theta^*) - \min_{q_X \in Q_X}R_X(q_X,\theta^*) ] \\
&=\lim_{n \rightarrow \infty}\sum_{\substack{ z \in (X \times Y)^n \\[1pt] \theta^{\rm ML}(z)  \neq \theta^*}}
p(z;\theta^*)
[R_X(g_n^{\rm ML}(z),\theta^*) - \min_{q_X \in Q_X}R_X(q_X,\theta^*) ] \\
&\leq\lim_{n \rightarrow \infty}\sum_{\substack{ z \in (X \times Y)^n \\[1pt] \theta^{\rm ML}(z)  \neq \theta^*}}
p(z;\theta^*)
[ \max_{y \in Y}\max_{y' \in Y} w(y,y') - \min_{y \in Y}\min_{y' \in Y} w(y,y')] \\
&=\lim_{n \rightarrow \infty}[ \max_{y \in Y}\max_{y' \in Y} w(y,y') - \min_{y \in Y}\min_{y' \in Y} w(y,y')]
\sum_{\substack{ z \in (X \times Y)^n \\[1pt] \theta^{\rm ML}(z)  \neq \theta^*}}p(z;\theta^*)\\
&=[ \max_{y \in Y}\max_{y' \in Y} w(y,y') - \min_{y \in Y}\min_{y' \in Y} w(y,y')]\lim_{n \rightarrow \infty} 
P(Z_n^{\rm ML}(\theta^*,n)) =0.
\end{aligned}
\end{multline*}
\end{proof}

%

We restrict our further considerations to Bayesian learning procedures, focusing in particular on the minimax procedure
$\argmin\limits_{g_Z \in G_Z} \max\limits_{\theta \in \Theta} R_Z(g_Z,\theta)$
and the closest to optimal procedure
$\argmin\limits_{g_Z \in G_Z} \max\limits_{ \theta \in \Theta} \bigl[ R_Z(g_Z,\theta)-\min\limits_{q_X \in Q_X}R_X(q_X,\theta) \bigr]$.
Both procedures belong to the Bayesian class.
However, minimax procedures have a fundamental drawback that they sometimes do not make use of a learning sample, no matter how large it is.
For a rather wide class of complex objects, the minimax learning procedure simply ignores the learning sample.
\begin{theorem} \label{ThMinMaxIsBad}
	Let for a complex object $\left\langle X, Y, \Theta, p:X\times Y \times \Theta \rightarrow\mathbb{R}\right\rangle$ such a model $\theta^*$ and a strategy $q_X^*$~exist that
	\begin{equation} \nonumber 
		q_X^*= \argmin_{q_X \in Q_X}R_X(q_X,\theta^*), \quad \theta^*= \argmax_{\theta \in \Theta}R_X(q_X^*,\theta). 
	\end{equation} 
	Then any source $\left\langle Z, \Theta, p:Z \times \Theta \rightarrow \mathbb{R} \right\rangle$ of learning information and any learning procedure 
	$g_Z{:}\  Z \rightarrow Q$ satisfy the inequality
	\begin{equation} \label{MaxMinEqualsMinMax}
		\max_{\theta \in \Theta}R_Z(g_Z,\theta) \geq \max_{\theta \in \Theta} R_X(q_X^*,\theta).	
	\end{equation} 
\end{theorem}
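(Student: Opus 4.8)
The plan is to obtain a lower bound on $\max_{\theta\in\Theta}R_Z(g_Z,\theta)$ simply by evaluating the maximand at the single model $\theta^*$ supplied by the hypothesis, and then to argue that this one value already reaches $\max_{\theta\in\Theta}R_X(q_X^*,\theta)$, no matter which learning procedure $g_Z$ is used. Nothing about the structure of the learning source $Z$ will actually be needed beyond the fact that $p(\cdot;\theta^*)$ is a probability distribution.

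First I would write out, from the definition of $R_Z$, the identity $R_Z(g_Z,\theta^*)=\sum_{z\in Z}p(z;\theta^*)\,R_X(g_Z(z),\theta^*)$. The key observation is that for every learning information $z\in Z$ the function $g_Z(z)$ is itself a strategy, i.e.\ $g_Z(z)\in Q_X$, so $R_X(g_Z(z),\theta^*)\ge\min_{q_X\in Q_X}R_X(q_X,\theta^*)$, and by the first assumed property of $q_X^*$ this minimum equals $R_X(q_X^*,\theta^*)$. Since $\sum_{z\in Z}p(z;\theta^*)=1$, averaging these pointwise inequalities over $z$ with the weights $p(z;\theta^*)$ yields $R_Z(g_Z,\theta^*)\ge R_X(q_X^*,\theta^*)$.

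Then I would invoke the second assumed property, $\theta^*=\argmax_{\theta\in\Theta}R_X(q_X^*,\theta)$, which says precisely that $R_X(q_X^*,\theta^*)=\max_{\theta\in\Theta}R_X(q_X^*,\theta)$. Chaining this with the trivial bound $\max_{\theta\in\Theta}R_Z(g_Z,\theta)\ge R_Z(g_Z,\theta^*)$ gives exactly inequality~(\ref{MaxMinEqualsMinMax}), which completes the proof.

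There is no genuine obstacle: the whole argument is a one-step pointwise domination followed by the max--min hypothesis. The only points requiring care are that $g_Z(z)\in Q_X$ for \emph{every} $z$, so the optimal-risk lower bound $\min_{q_X}R_X(q_X,\theta^*)$ applies uniformly in $z$, and that $p(\cdot;\theta^*)$ sums to one, so the $p(\cdot;\theta^*)$-average of a quantity bounded below by a constant is still bounded below by that constant. It is also worth remarking that the bound is attained — take $Z=\{0\}$ and $g_Z(0)=q_X^*$ — which is the precise sense in which the minimax learning procedure extracts nothing from the learning sample.
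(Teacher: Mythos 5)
Your proof is correct and follows essentially the same chain as the paper's: bound $\max_\theta R_Z(g_Z,\theta)$ below by $R_Z(g_Z,\theta^*)$, expand as a $p(\cdot;\theta^*)$-average, use $R_X(g_Z(z),\theta^*)\geq R_X(q_X^*,\theta^*)$ pointwise, and apply the saddle-point hypothesis. No differences worth noting.
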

\begin{proof} For any learning procedure~$g_Z$, we have the chain
\begin{eqnarray}
& \max\limits_{\theta \in \Theta} R_Z(g_Z,\theta) \geq R_Z(g_Z,\theta^*) = \sum\limits_{z \in Z} p(z;\theta^*) R_X(g_Z(z), \theta^*) \geq \notag \\
& \geq \sum\limits_{z \in Z}p(z;\theta^*) R_X(q_X^*, \theta^*)= R_X(q_X^*,\theta^*) = \max\limits_{\theta \in \Theta}R_X(q_X^*,\theta). \notag
\end{eqnarray}

\end{proof}
The theorem shows that there are complex objects for which the minimax approach is particularly inappropriate. 
Inequality~(\ref{MaxMinEqualsMinMax}) states that any learning procedure, however sophisticated it may be, is useless from the minimax point of view. 
It cannot yield a recognition strategy that would be better than some strategy that does not use the learning sample at all.
The following examples show that such situations are nothing unusual.

\begin{example}
\label{ex:1}
	The conditions of Theorem~\ref{ThMinMaxIsBad} are satisfied if for every weight function $ \tau\in T$, the model set~$\Theta$ contains a model~$\theta'$ such that 
    $p(x,y;\theta')=\sum\limits_{\theta \in \Theta}\tau(\theta) p(x,y;\theta)$. 
	One of such model sets is just the set (\ref{RobbinsModel}), for which H.Robbins shows that the minimax strategy does not achieve the result that the empirical Bayesian approach does.
\end{example}

\begin{example}
	Let $X=\mathbb{R}^2$,  $Y=\{1,2\}$, $p_Y(y=1)=p_Y(y=2)=0.5$, and $p_{X\given Y}(x\given y)$ be a two-dimensional Gaussian probability density distribution with the unit covariance matrix. 
	The expected values $\theta_y$, $y \in \{1,2\}$, of the random signal $x \in X$ are unknown. 
    It is only known that they belong to given closed convex disjoint sets~$\Theta_1$ and~$\Theta_2$. 
	This means, the set of models is $\Theta=\Theta_1\times\Theta_2$.
	This model set does not satisfy the condition of Example~\ref{ex:1}
	but it satisfies the conditions of Theorem~\ref{ThMinMaxIsBad}. 
	In this case, the model $\theta^* \in \Theta$ is the pair
	\begin{equation} \nonumber
		(\theta_1^*,\theta_2^*)= \argmin_{\theta_1 \in \Theta_1, \theta_2 \in \Theta_2} (\theta_1-\theta_2)^2,
	\end{equation}
	and the strategy~$q^*$ makes the decision
	\begin{equation} \nonumber
		y^* =
                \begin{cases}
                1 & \text{if}\;\; (x-\theta_1^*)^2 \leq (x-\theta_2^*)^2,\\
                2 & \text{if}\;\; (x-\theta_1^*)^2 > (x-\theta_2^*)^2.
                \end{cases}
	\end{equation}
	It is an acceptable recognition strategy for the complex object considered. 
	However, it is clear that a better strategy could be found if a learning sample was available, in particular if this sample was large enough. 
	Nevertheless, the minimax strategy ignores this opportunity even if the learning sample is very large. 
\end{example}

\begin{example}
	Let $p_y{:}\  X \rightarrow \mathbb{R}$,  $y \in \{1,2\}$, be two functions whose values are conditional probabilities $p_{X \given Y}(x\given y)$, so that $p_y(x)=p_{X \given Y}(x\given y)$. 
    It is a typical situation in pattern recognition that the functions~$p_1$ and~$p_2$ 
	are unknown and only a set~$P$ is known that contains both~$p_1$ and~$p_2$. 
	In this case, the set of models is $\Theta=P\times P$, 
	the model~$\theta^*$ is any pair $p_1=p_2$, and $q_X^*$ is the strategy 
	that for every signal $x$ decides about the state $y$ with equal probabilities $q_X^*(y=1\given x)=q_X^*(y=2\given x)$. This strategy is quite bad but no better strategy is possible.
	However, it is clear, at least intuitively, that the situation could be improved if a learning sample was available. 
    Minimax strategies are not able to take advantage of a learning sample, no matter how large.
\end{example}

Theorem \ref{ThMinMaxIsBad} is a severe negative result. The result is quite transparent and easy to prove, especially, 
if one notices that the condition of the theorem states that the function $R_X:Q_X \times \Theta \rightarrow \mathbb{}R$ has a saddle point and the pair $(q_X^*, \theta^*)$ is this point. 
Probably, the property was discovered many times before. 
In a special case, this property was noticed by H.Robbins when he wanted to derive the empirical Bayesian approach from the ideas of minimax decisions \cite{robbinsAssymptotical}.
This clarity may have been the reason for the hasty and pessimistic conclusion that recognition with learning is something very different from recognition without learning, 
something beyond the statistical decision theory. 

Theorems \ref{MaxLikeConsistency} and \ref{ThMinMaxIsBad} show the state of today's knowledge of complex object recognition as well as a gap in this knowledge. 
The minimax approach is suited for recognizing hidden state based on the signal that depends both on the state and on the model. 
However, for some complex objects this approach ignores learning information that depends directly only on the model and does not depend on the state. 
Unlike minimax approach, maximum likelihood learning makes use of the learning sample. 
It allows to recognize complex objects almost as if the model of the object is known. 
However, it is possible only when the learning sample size can be increased almost infinitely. 
For limited samples and for some objects maximum likelihood learning is not Bayesian and is predominated by some other methods. 
Therefore, today's developers 
have to cope alone with the problem when the model of an object is not completely known but several few examples of object's behaviour (let us say, 4-5) are available.

Let us see now how the closest to optimal learning procedures behave. 
As shown by the following theorem similar to Theorem~\ref{ThMinMaxIsBad}, these procedures are also useless for a certain class of complex objects.

\begin{theorem} \label{SubBayesIsBad}
Let for a complex object $\left\langle X, Y, \Theta, p:X\times Y \times \Theta \rightarrow\mathbb{R}\right\rangle$  a model $\theta^*$ and a strategy $q_X^*$~exist such that
	\begin{equation} \label{NearOptStrategy}
			q_X^*= \argmin_{q_X \in Q_X}[R_X(q_X,\theta^*)-\min_{q_X' \in Q_X}R_X(q_X',\theta^*)],
		\end{equation} 
	\begin{equation} \label{NearOptModel}
			\theta^* = \argmax_{\theta \in \Theta}[R_X(q_X^*,\theta)-\min_{q_X' \in Q_X}R_X(q_X',\theta)].
		\end{equation} 
	Then any source $\left\langle Z, \Theta, p:Z \times \Theta \rightarrow \mathbb{R} \right\rangle$ of learning information and any learning procedure 
	$g_Z{:}\  Z \rightarrow Q$ satisfy the inequality
	\begin{equation} \nonumber 
		\max_{\theta \in \Theta}[R_Z(g_Z,\theta)-\min_{q_X' \in Q_X}R_X(q_X',\theta)] \geq \max_{\theta \in \Theta}
	[R_X(q_X^*,\theta)-\min_{q_X' \in Q_X}R_X(q_X',\theta)] .
	\end{equation}
\end{theorem}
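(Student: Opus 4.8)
The plan is to closely mirror the saddle-point argument used for Theorem~\ref{ThMinMaxIsBad}, simply replacing the plain risk by the deviation from the optimal risk. First I would introduce shorthand for the two regret functionals, $D_X(q_X,\theta)=R_X(q_X,\theta)-\min_{q_X' \in Q_X}R_X(q_X',\theta)$ for recognition strategies and $D_Z(g_Z,\theta)=R_Z(g_Z,\theta)-\min_{q_X' \in Q_X}R_X(q_X',\theta)$ for learning procedures, emphasising that the subtracted term $\min_{q_X' \in Q_X}R_X(q_X',\theta)$ is one and the same model-dependent constant in both, and in particular does not depend on the strategy. With this notation, hypotheses (\ref{NearOptStrategy}) and (\ref{NearOptModel}) say precisely that $q_X^*=\argmin_{q_X \in Q_X}D_X(q_X,\theta^*)$ and $\theta^*=\argmax_{\theta \in \Theta}D_X(q_X^*,\theta)$, i.e.\ that $(q_X^*,\theta^*)$ is a saddle point of $D_X$, exactly playing the role that $(q_X^*,\theta^*)$ played for $R_X$ in Theorem~\ref{ThMinMaxIsBad}.

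The key preliminary step is to note that the regret of a learning procedure is the $z$-average of the regrets of the recognition strategies it outputs. Starting from $R_Z(g_Z,\theta)=\sum_{z \in Z}p(z;\theta)R_X(g_Z(z),\theta)$ and using $\sum_{z \in Z}p(z;\theta)=1$ to pull the constant $\min_{q_X' \in Q_X}R_X(q_X',\theta)$ inside the sum, I obtain
\begin{equation} \nonumber
D_Z(g_Z,\theta)=\sum_{z \in Z}p(z;\theta)\,D_X(g_Z(z),\theta).
\end{equation}
This is the only spot requiring a moment's care — the subtraction commutes with the expectation solely because the $p(z;\theta)$ sum to one — but it is routine, and I do not expect any real obstacle here.

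Then I would write down the chain of inequalities: evaluate $\max_{\theta}D_Z(g_Z,\theta)$ at $\theta^*$, substitute the identity above, bound $D_X(g_Z(z),\theta^*) \geq \min_{q_X \in Q_X}D_X(q_X,\theta^*)=D_X(q_X^*,\theta^*)$ termwise using (\ref{NearOptStrategy}) (this is legitimate since each $g_Z(z)$ lies in $Q_X$), collapse the resulting sum again by $\sum_{z}p(z;\theta^*)=1$, and finally invoke (\ref{NearOptModel}) to identify $D_X(q_X^*,\theta^*)=\max_{\theta}D_X(q_X^*,\theta)$, namely
\begin{multline*}
\max_{\theta \in \Theta}D_Z(g_Z,\theta) \geq D_Z(g_Z,\theta^*) = \sum_{z \in Z}p(z;\theta^*)\,D_X(g_Z(z),\theta^*) \\
\geq \sum_{z \in Z}p(z;\theta^*)\,D_X(q_X^*,\theta^*) = D_X(q_X^*,\theta^*) = \max_{\theta \in \Theta}D_X(q_X^*,\theta),
\end{multline*}
which is exactly the asserted inequality. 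In short, there is no hard part: the statement is just the observation that a regret-minimising response at the worst-case model cannot be improved upon in expectation at that same model, and the whole proof is a three-line computation once the regret-averaging identity is in place.
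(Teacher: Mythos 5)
Your proof is correct and is exactly the argument the paper intends: the paper states Theorem~\ref{SubBayesIsBad} without an explicit proof, presenting it as "similar to Theorem~\ref{ThMinMaxIsBad}", and your chain is precisely that saddle-point argument transplanted from the risk $R$ to the regret $D$, with the regret-averaging identity $D_Z(g_Z,\theta)=\sum_{z}p(z;\theta)D_X(g_Z(z),\theta)$ correctly justified by $\sum_{z}p(z;\theta)=1$. Every step checks out.
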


However, the consequences of this theorem for closest to optimal learning are not so destructive as those of Theorem~\ref{ThMinMaxIsBad} for minimax learning. 
In fact, conditions~(\ref{NearOptStrategy}) and~(\ref{NearOptModel}) imply that an overall optimal Bayesian strategy exists for the recognized object, namely the strategy~$q_X^*$. 
Really, it follows from (\ref{NearOptStrategy}) that
	\begin{equation} \nonumber
		q_X^*= \argmin_{q_X \in Q_X}[R_X(q_X,\theta^*)-\min_{q_X' \in Q_X}R_X(q_X',\theta^*)]
		= \argmin_{q_X \in Q_X}R_X(q_X,\theta^*),
	\end{equation} 
	\begin{equation} \nonumber
	R_X(q_X^*,\theta^*)-\min_{q_X' \in Q_X}R_X(q_X',\theta^*)=0.
	\end{equation} 
Thus, a condition  (\ref{NearOptModel}) obtains a form
 \begin{equation} \nonumber 
 		0 \geq R_X(q_X^*,\theta)-\min_{q_X' \in Q_X}R_X(q_X',\theta)  \text{   for all   } \theta \in \Theta.
 \end{equation}
Evidently,  
\begin{equation} \nonumber 
 		R_X(q_X^*,\theta)-\min_{q_X' \in Q_X}R_X(q_X',\theta) \ge 0  \text{  for all  } \theta \in \Theta,
\end{equation}
and so 
\begin{equation} \nonumber 
 		R_X(q_X^*,\theta)=\min_{q_X' \in Q_X}R_X(q_X',\theta)  \text{   for all  } \theta \in \Theta.
\end{equation}
In this case, any learning approach is useless, not only closest to optimal one. 

Just as maximum likelihood learning, closest to optimal learning is also asymptotically optimal in the sense of the following theorem.
\begin{theorem} \label{SubBayes Is Consistent}
Let $Z=(X \times Y)^n$ and the learning procedure $g_n^*{:}\ (X \times Y)^n \rightarrow Q$ be defined by
\begin{equation} \nonumber 
g_n^*=\argmin_{g_Z \in G_Z} \max_{\theta \in \Theta} [R_Z(g_Z, \theta)- \min_{q_X \in Q_X}R_X(q_X,\theta)].
\end{equation}
Then
\begin{equation} \label{eq:SubBayes Is Consistent}
\lim_{n \rightarrow \infty}\max_{\theta \in \Theta}[R_Z(g_n^*,\theta) - \min_{q_X \in Q_X}R_X(q_X,\theta) ] = 0. 
\end{equation}
\end{theorem}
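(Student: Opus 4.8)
The plan is to sandwich the quantity $\max_{\theta \in \Theta}[R_Z(g_n^*,\theta) - \min_{q_X \in Q_X}R_X(q_X,\theta)]$ between $0$ and the corresponding quantity for the maximum likelihood procedure $g_n^{\rm ML}$ of Theorem~\ref{MaxLikeConsistency}, and then let $n \to \infty$. The key observation is that $g_n^{\rm ML}$ is an element of $G_Z$, so it is a legitimate competitor in the minimization that defines $g_n^*$; hence the closest to optimal procedure can only do better than the maximum likelihood one with respect to the criterion being minimized.

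First I would record the lower bound. For any $g_Z \in G_Z$ and any $\theta \in \Theta$, since $g_Z(z) \in Q_X$ for every $z \in Z$ we have $R_X(g_Z(z),\theta) \ge \min_{q_X \in Q_X}R_X(q_X,\theta)$; multiplying by $p(z;\theta) \ge 0$ and summing over $z$ gives $R_Z(g_Z,\theta) \ge \min_{q_X \in Q_X}R_X(q_X,\theta)$. In particular $\max_{\theta \in \Theta}[R_Z(g_n^*,\theta) - \min_{q_X \in Q_X}R_X(q_X,\theta)] \ge 0$. For the upper bound, the definition of $g_n^*$ as a minimizer over $G_Z$ together with $g_n^{\rm ML} \in G_Z$ yields
\[
\max_{\theta \in \Theta}\bigl[R_Z(g_n^*,\theta) - \min_{q_X \in Q_X}R_X(q_X,\theta)\bigr] \le \max_{\theta \in \Theta}\bigl[R_Z(g_n^{\rm ML},\theta) - \min_{q_X \in Q_X}R_X(q_X,\theta)\bigr].
\]

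It then remains to show the right-hand side tends to $0$. Theorem~\ref{MaxLikeConsistency} gives, for each fixed $\theta^* \in \Theta$, that $R_Z(g_n^{\rm ML},\theta^*) - \min_{q_X \in Q_X}R_X(q_X,\theta^*) \to 0$ as $n \to \infty$. Since $\Theta$ is finite, a maximum of finitely many sequences each converging to $0$ also converges to $0$, and each term is nonnegative by the lower-bound argument, so the maximum is squeezed between $0$ and the largest individual term. Combining with the lower bound for $g_n^*$ and the squeeze theorem yields~(\ref{eq:SubBayes Is Consistent}). I do not anticipate a genuine obstacle; the only thing requiring care is the passage from the pointwise statement of Theorem~\ref{MaxLikeConsistency} (one model at a time) to a statement uniform in $\theta$, which is exactly where finiteness of $\Theta$ is used, together with the fact that the ``without loss of generality'' distinguishability assumption preceding Theorem~\ref{MaxLikeConsistency} remains in force.
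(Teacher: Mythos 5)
Your proposal is correct and follows essentially the same route as the paper's own proof: bound the closest to optimal procedure's criterion value by that of the maximum likelihood procedure (a legitimate competitor in $G_Z$), invoke Theorem~\ref{MaxLikeConsistency} together with the finiteness of $\Theta$ to get uniform convergence, and conclude. Your explicit treatment of the lower bound $R_Z(g_Z,\theta) \geq \min_{q_X \in Q_X}R_X(q_X,\theta)$ is a small addition the paper leaves implicit, but the argument is the same.
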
    
\begin{proof} 
We proved earlier (see (\ref{BernulliMain}))  that
\begin{equation} \nonumber 
\lim_{n \rightarrow \infty}[R_Z(g_n^{\rm ML},\theta) - \min_{q_X \in Q_X}R_X(q_X,\theta) ] = 0
\end{equation}
for procedure~(\ref{MaxLikeProcedure}) and every model~$\theta$. Consequently,
\begin{equation} \label{ConvergToZero} 
\lim_{n \rightarrow \infty}\max_{\theta \in \Theta}[R_Z(g_n^{\rm ML},\theta) - \min_{q_X \in Q_X}R_X(q_X,\theta) ] = 0
\end{equation}
because the number $|\Theta|$ of models is finite. Clearly,
\begin{equation} \label{evident}
\min_{g_Z \in G_Z} \max_{\theta \in \Theta}[R_Z(g_Z,\theta) - \min_{q_X \in Q_X}R_X(q_X,\theta) ] \leq 
\max_{\theta \in \Theta}[R_Z(g_n^{\rm ML},\theta) - \min_{q_X \in Q_X}R_X(q_X,\theta) ] .
\end{equation} 
Equality (\ref{ConvergToZero}) and inequality~(\ref{evident}) imply~(\ref{eq:SubBayes Is Consistent}).
\end{proof}
Hence, the concept of closest to optimal learning fills the above-mentioned gap in complex object recognition. 
It embraces the whole range of learning sample sizes, from zero to infinity. 
Unlike minimax approach it ignores the learning sample only if an overall optimal strategy exists for the recognized object. 
Certainly, in this case no information that specifies the model is needed. 
Similarly to the risks of maximum likelihood learning, the risks of closest to optimal learning converge to the risks of the optimal strategy. 
Unlike maximum likelihood learning, closest to optimal learning is Bayesian and is predominated by no other learning procedure.

\section{Developing closest to optimal learning procedures}
\label{SubBayesianLearning} 
We consider a complex object, presented with a quadruple $\left\langle X, Y, \Theta, p:X\times Y \times \Theta \rightarrow\mathbb{R}\right\rangle,$ 
and a source of learning information, presented with a triple $\left\langle Z, \Theta, p:Z \times \Theta \rightarrow \mathbb{R} \right\rangle$.
The following two theorems show the way of developing the closest to optimal learning
procedure. Clearly, this procedure is predominated with no procedure and therefore it is Bayesian. The next theorem specifies a weight function, with respect to which it is Bayesian.
\begin{theorem}  \label{ThSubOptimalProcedure}
	The closest to optimal learning procedure
	\begin{equation} \nonumber
	g_Z^*=\argmin_{g_Z \in G_Z} \max_{ \theta \in \Theta} \bigl[ R_Z(g_Z,\theta)-\min_{q_X \in Q_X}R_X(q,\theta) \bigr]
	\end{equation}
	is a Bayesian procedure with respect to weights $\tau^*(\theta), \theta \in \Theta$ that maximize the difference
	\begin{equation} \nonumber 
		\biggl[\min_{g_Z \in G_Z}\sum_{\theta \in \Theta}\tau(\theta)\cdot R_Z(g_Z,\theta)\biggr]-\biggl[\sum_{\theta \in \Theta}\tau(\theta)\cdot \min_{q_X \in Q_X}R_X(q_X,\theta)\biggr].
	\end{equation}
\end{theorem}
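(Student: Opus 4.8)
The plan is to reuse the minimax/duality argument from the proof of Theorem~\ref{StrictlyBadStrategy}, now carried out in the space of learning procedures and with the profile of optimal risks serving as the reference point. Write $d(\theta)=\min\limits_{q_X\in Q_X}R_X(q_X,\theta)$; this is a finite number for each $\theta$ and, decisively, it does not depend on the learning procedure. I would introduce
\begin{equation} \nonumber
F(\tau,g_Z)=\sum_{\theta\in\Theta}\tau(\theta)\bigl[R_Z(g_Z,\theta)-d(\theta)\bigr],\qquad \tau\in T,\ g_Z\in G_Z,
\end{equation}
and check that $F$ is linear in $g_Z$ for each fixed $\tau$ and linear in $\tau$ for each fixed $g_Z$: linearity in $\tau$ is immediate, and linearity in $g_Z$ follows from $R_Z(g_Z,\theta)=\sum_{z\in Z}p(z;\theta)R_X(g_Z(z),\theta)$ together with the linearity of $R_X(\cdot,\theta)$ noted right after~(\ref{RiskDef}). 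Since $T$ and $G_Z$ are closed and convex (and, as $Z$ is finite, $G_Z$ is a finite product of copies of the compact set $Q$, hence compact), the duality theorem~\cite{borwein, boyd, hiriart} applies and gives $\min\limits_{g_Z}\max\limits_{\tau}F=\max\limits_{\tau}\min\limits_{g_Z}F$.

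Next I would read off the two sides in the language of the statement. A linear function on the simplex $T$ attains its maximum at a vertex, so $\max\limits_{\tau\in T}F(\tau,g_Z)=\max\limits_{\theta\in\Theta}[R_Z(g_Z,\theta)-d(\theta)]$; consequently $\min\limits_{g_Z}\max\limits_{\tau}F$ is exactly the value minimized by the closest to optimal procedure, and that minimum is attained at $g_Z^*$, i.e. $\max\limits_\tau F(\tau,g_Z^*)=\min\limits_{g_Z}\max\limits_\tau F$. On the other side, $\min\limits_{g_Z}F(\tau,g_Z)=\bigl[\min\limits_{g_Z}\sum_\theta\tau(\theta)R_Z(g_Z,\theta)\bigr]-\bigl[\sum_\theta\tau(\theta)d(\theta)\bigr]$, which is precisely the difference the theorem asks to be maximized; hence the maximizing weight function $\tau^*=\argmax\limits_\tau\min\limits_{g_Z}F(\tau,g_Z)$ is the $\tau^*$ named in the statement.

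It remains to extract the saddle-point identity. From $\max\limits_\tau F(\tau,g_Z^*)=\min\limits_{g_Z}\max\limits_\tau F=\max\limits_\tau\min\limits_{g_Z}F=\min\limits_{g_Z}F(\tau^*,g_Z)$, combined with the trivial inequalities $F(\tau^*,g_Z^*)\le\max\limits_\tau F(\tau,g_Z^*)$ and $\min\limits_{g_Z}F(\tau^*,g_Z)\le F(\tau^*,g_Z^*)$, all three quantities coincide, so $F(\tau^*,g_Z^*)=\min\limits_{g_Z}F(\tau^*,g_Z)$. Because $\sum_\theta\tau^*(\theta)d(\theta)$ does not depend on $g_Z$, this is the assertion $g_Z^*=\argmin\limits_{g_Z\in G_Z}\sum_\theta\tau^*(\theta)R_Z(g_Z,\theta)$, i.e. the closest to optimal procedure is Bayesian with respect to $\tau^*$.

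I do not anticipate a real obstacle, since the structure mirrors Theorem~\ref{StrictlyBadStrategy} almost verbatim; the only things needing a touch of care are verifying that $G_Z$ (the set of maps $Z\rightarrow Q$) is convex and compact so the duality theorem is legitimately invoked, and carrying the constant $\sum_\theta\tau^*(\theta)d(\theta)$ correctly through both equalities so that the minimizer of the $\tau^*$-weighted risk genuinely coincides with the minimizer of $F(\tau^*,\cdot)$. Both are routine bookkeeping.
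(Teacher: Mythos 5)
Your proposal is correct and follows essentially the same route as the paper's own proof: the same bilinear function $F(\tau,g_Z)$, the same appeal to the minimax duality theorem on $T\times G_Z$, and the same extraction of the saddle-point identity $F(\tau^*,g_Z^*)=\min_{g_Z}F(\tau^*,g_Z)$ to conclude that $g_Z^*$ is Bayesian with respect to $\tau^*$. The only difference is that you spell out a few details the paper leaves implicit (compactness of $G_Z$ and the identification $\max_{\tau\in T}F(\tau,g_Z)=\max_{\theta\in\Theta}[R_Z(g_Z,\theta)-d(\theta)]$), which is fine.
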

\begin{proof}
	Let us define a function $F{:}\ T\times G_Z\to\mathbb{R}$ given by
	\begin{equation} \nonumber 
		F(\tau,g_Z) = \biggl[\sum_{\theta \in \Theta}\tau(\theta)\cdot R_Z(g_Z,\theta)\biggr]-\biggl[\sum_{\theta \in \Theta}\tau(\theta)\cdot \min_{q_X \in Q_X}R_X(q_X,\theta)\biggr]
	\end{equation}
that allows to express procedure $g_Z^*$ and weight function $\tau^*$ in a form
	\begin{equation} \nonumber 
		g_Z^*=  \argmin_{g_Z \in G_Z} \max_{\tau \in T}F(\tau,g_Z), \quad 
		\tau^*= \argmax_{\tau \in T} \min_{g_Z \in G_Z}F(\tau, g_Z).
	\end{equation}
	We have to prove that $g_Z^*= \argmin\limits_{g_Z \in G_Z}\sum\limits_{\theta \in \Theta}\tau^*(\theta)R_Z(g_Z, \theta)$.
	The function $F{:}\ T\times G_Z \rightarrow \mathbb{R}$ is a linear function of $g_Z$ for every fixed $\tau$ and a linear function of $\tau$ for every fixed $g_Z$. 
    It is defined on the Cartesian product of two convex closed sets~$T$ and~$G_Z$. Due to the well-known duality theorem \cite{borwein, boyd, hiriart} this function satisfies the equality
	\begin{equation} \nonumber 
		\min_{g_Z \in G_Z}\left[ \max_{\tau \in T}F(\tau, g_Z)\right]= F(\tau^*,g_Z^*)=\max_{\tau \in T}\left[\min_{g_Z \in G_Z}F(\tau, g_Z)\right].
	\end{equation}
	This implies the equality $F(\tau^*, g_Z^*) = \min\limits_{g_Z \in G_Z}F(\tau^*, g_Z)$
 and the chain
	\begin{align*}
		g_Z^* &= \argmin_{g_Z \in G_Z}F(\tau^*,g_Z) 
               = \argmin_{g_Z \in G_Z}\biggl[\sum_{\theta \in \Theta}\tau^*(\theta) R_Z(g_Z,\theta)-\sum_{\theta \in \Theta}\tau^*(\theta) \min_{q_X \in Q_X}R(q,\theta)\biggr] = \\
		&= \argmin_{g_Z \in G_Z}\sum_{\theta \in \Theta}\tau^*(\theta) R_Z(g_Z,\theta).
	\end{align*}
\end{proof}
For each $g_Z \in G_Z$, the function $F(\tau, g_Z)$ depends linearly on weights $\tau(\theta)$, $\theta \in \Theta$. 
Therefore, evidently, the function $\Phi(\tau)=\min_{g_Z \in G_Z}F(\tau, g_Z)$ is a concave function of these weights.
Nevertheless, we deduce this fact directly from the definition of concavity and obtain an explicit expression for the supergradient of the function $\Phi$, 
which is necessary for its further maximization.
\begin{theorem} \label{ThSubOptimalConcave}
	The function $\Phi{:}\ \mathbb{R}^{\Theta} \rightarrow \mathbb{R}$ with values
	\begin{equation} \nonumber
		\Phi(\tau)=\biggl[\min_{g_Z \in G_Z}\sum_{\theta \in \Theta}\tau(\theta)\cdot R_Z(g_Z,\theta)\biggr]-\biggl[\sum_{\theta \in \Theta}\tau(\theta)\cdot \min_{q_X \in Q_X}R_X(q_X,\theta)\biggr]
	\end{equation}
	is a concave function of weights $\tau(\theta)$, $\theta \in \Theta$. 
\end{theorem}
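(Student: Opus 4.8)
The plan is to write $\Phi$ as a difference of a concave function and an affine one and to verify concavity directly, producing the supergradient along the way. For each fixed learning procedure $g_Z \in G_Z$, the map $\tau \mapsto \sum_{\theta \in \Theta}\tau(\theta) R_Z(g_Z,\theta)$ is linear in the weights. Hence the first bracket in the definition of $\Phi$ is a pointwise infimum over $g_Z$ of affine functions of $\tau$, which is concave; the second bracket is exactly linear in $\tau$ because the numbers $\min_{q_X \in Q_X}R_X(q_X,\theta)$ do not depend on $\tau$. A concave function minus a linear function is concave, which is the claim. One small point to record first: the inner minimum is attained, since $G_Z$ is a product of probability simplices, hence compact, and $R_Z(\cdot,\theta)$ is linear, hence continuous, in $g_Z$; so $\Phi(\tau)$ is finite for every $\tau \in \mathbb{R}^\Theta$.

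To make the argument self-contained and to exhibit the supergradient, I would fix an arbitrary $\tau_0 \in \mathbb{R}^\Theta$ and choose $g^* \in G_Z$ attaining $\min_{g_Z \in G_Z}\sum_{\theta \in \Theta}\tau_0(\theta) R_Z(g_Z,\theta)$. For an arbitrary $\tau$, the procedure $g^*$ is feasible (though not necessarily optimal) for the minimization defining the first bracket at $\tau$, so
\begin{equation} \nonumber
\Phi(\tau) \leq \sum_{\theta \in \Theta} \tau(\theta)\bigl[R_Z(g^*,\theta) - \min_{q_X \in Q_X}R_X(q_X,\theta)\bigr],
\end{equation}
while the choice of $g^*$ gives equality at $\tau_0$,
\begin{equation} \nonumber
\Phi(\tau_0) = \sum_{\theta \in \Theta} \tau_0(\theta)\bigl[R_Z(g^*,\theta) - \min_{q_X \in Q_X}R_X(q_X,\theta)\bigr].
\end{equation}
Subtracting, $\Phi(\tau) - \Phi(\tau_0) \leq \sum_{\theta \in \Theta}(\tau(\theta) - \tau_0(\theta))\bigl[R_Z(g^*,\theta) - \min_{q_X \in Q_X}R_X(q_X,\theta)\bigr]$, which exhibits the vector with components $R_Z(g^*,\theta) - \min_{q_X \in Q_X}R_X(q_X,\theta)$, $\theta \in \Theta$, as a supergradient of $\Phi$ at $\tau_0$. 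Concavity then follows from the existence of a supergradient at every point; equivalently, one may take $\tau_0 = \lambda\tau_1 + (1-\lambda)\tau_2$ with $\lambda \in [0,1]$ in the displayed inequality, evaluate it at $\tau = \tau_1$ and at $\tau = \tau_2$, and add the two instances with weights $\lambda$ and $1-\lambda$.

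There is essentially no hard step here; the only things to be careful about are the attainment of the inner minimum (handled by compactness of $G_Z$) and the bookkeeping that the term $\sum_{\theta \in \Theta}\tau(\theta)\min_{q_X \in Q_X}R_X(q_X,\theta)$, being genuinely linear rather than merely concave, contributes an equality and not just an inequality when passing from $\tau_0$ to $\tau$. The explicit supergradient formula is the actual payoff, since it is exactly what a projected supergradient ascent on $\Phi$ will use to locate the maximizing weight function $\tau^*$ of Theorem~\ref{ThSubOptimalProcedure}.
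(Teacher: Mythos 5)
Your proof is correct and follows essentially the same route as the paper: both establish concavity by exhibiting, at each point $\tau_0$, the explicit supergradient with components $R_Z(g^*,\theta)-\min_{q_X\in Q_X}R_X(q_X,\theta)$ obtained from a minimizer $g^*$ of the inner problem at $\tau_0$, using feasibility of $g^*$ at other $\tau$ for the inequality and optimality at $\tau_0$ for the equality. The additional remarks on attainment of the minimum and on the pointwise-infimum-of-affine-functions viewpoint are consistent with the paper's own framing and do not change the argument.
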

\begin{proof}
	We show that for every point $\tau^* \in \mathbb{R}^{\Theta}$ some linear function $L{:}\ \mathbb{R}^{\Theta} \rightarrow \mathbb{R}$ exists that satisfies the inequality 
	\begin{equation} \label{supergradient}
	L(\tau - \tau^*) \geq \Phi(\tau) - \Phi(\tau^*)
	\end{equation}
	for every $\tau \in \mathbb{R}^{\Theta}$. Note that the gradient of this linear function is a supergradient of the function $\Phi$ at the point $\tau^*$. 
    One of possible linear functions that satisfy (\ref{supergradient}) is the function with values
	\begin{equation} \nonumber
		L(\tau)=\sum_{\theta \in \Theta}\tau(\theta)\biggl[ R_Z(g_Z^*,\theta)- \min_{q_X \in Q_X}R_X(q_X,\theta)\biggr]
	\quad\text{where}\quad
		g_Z^*= \argmin_{g_Z \in G_Z} \sum_{\theta \in \Theta} \tau^*(\theta)R_Z(g_Z,\theta).
	\end{equation}
	Indeed, 
\begin{multline*}
L(\tau - \tau^*)= \\
\begin{aligned}
&= \sum_{\theta \in \Theta}\tau(\theta)\biggl[ R_Z(g_Z^*,\theta)- \min_{q_X \in Q_X}R_X(q_X,\theta)\biggr]-
	\sum_{\theta \in \Theta}\tau^*(\theta)\biggl[ R_Z(g_Z^*,\theta)- \min_{q_X \in Q_X}R_X(q_X,\theta)\biggr] \geq \\
&\geq \phantom{-} \min_{g_Z \in G_Z}\sum_{\theta \in \Theta}\tau(\theta) \biggl[   R_Z(g_Z,\theta)- \min_{q_X \in Q_X}R_X(q_X,\theta) \biggr] -\\
		&\phantom{\geq} - \min_{g_Z \in G_Z}\sum_{\theta \in \Theta}\tau^*(\theta) \biggl[ R_Z(g_Z,\theta)- \min_{q_X \in Q_X}R_X(q_X,\theta)\biggr] 
= \Phi(\tau)-\Phi(\tau^*).
\end{aligned}
\end{multline*}
\end{proof}
Due to concavity of the function $\Phi$, it can be maximized using well-known methods of convex optimization, such as supergradient hill-climbing \cite{shorMethBook}.
To do that, an auxiliary procedure to project a point $\tau \in \mathbb{R}^\Theta$ onto the set $T$ is needed. 
Strictly speaking, this procedure finds the point $\argmin_{\tau' \in T}(\tau' - \tau)^2$ for a given point $\tau \in \mathbb{R}^\Theta$. In our case the procedure is as follows:
\begin{enumerate} \itemsep0pt
\item for all $\theta \in \Theta$  do $\tau'(\theta) = \tau(\theta);$
\item for all $\theta \in \Theta$ do $ \tau'(\theta)= \tau'(\theta)-(\sum_{\theta \in \Theta}\tau'(\theta)-1) \cdot |\Theta|^{-1};$
\item if $\tau'(\theta)\geq 0$ for all $\theta \in \Theta$ then $STOP$;
\item for all $\theta \in \Theta$ do $\tau'(\theta)= \max \{ \tau'(\theta), 0\};$
\item go to 2;
\end{enumerate}

Next we present a technique to compute the closest to optimal learning procedure. 
We use the word `technique' rather than `algorithm', to emphasize that some of its steps are far from trivial and have to be developed specifically for every task at hand. 
The technique can be seen as an algorithm only in very simple cases, such as the examples given in Section~\ref{Testing}.
 
Initially, the following input data have to be given: 
\begin{enumerate} \itemsep1pt
\item sets $X$, $Y$, $\Theta$ and probabilities $p(x,y;\theta)$, $x \in X$, $y \in Y$, $\theta \in \Theta$, which define the complex object to be recognized; 
\item set $Z$ and probabilities $p(z;\theta)$, $z \in Z$, $\theta \in \Theta$, which define the source of learning information; 
\item values of the loss function $w(y,y')$ for every $y \in Y, y' \in Y$
\item numbers $\min_{q_X \in Q_X}  R_X(q_X,\theta)$ for every $\theta \in \Theta$;
\item sequence of numbers $\gamma_i$, $i = 1,2, \ldots$, such that \\$\lim_{i \rightarrow \infty}\gamma_i=0$ and 
$\lim_{n \rightarrow \infty} \sum_{i=1}^n \gamma_i=\infty$; 
\item required accuracy $\varepsilon > 0$; 
\item initial weights $\tau(\theta)$, $\theta \in \Theta$; for example,  $\tau(\theta)= |\Theta|^{-1}$; 
\item initial values of numbers $S$ and $s$, which will keep an upper and lower bound of the quality of the learning procedure; for example $S=\infty$ and $s=0$.
\end{enumerate}
The $i$-th iteration of the technique proceeds as follows:
\begin{enumerate} \itemsep1pt
\item construct a Bayesian procedure $g_Z$ for the current weight function $\tau$; 
\item for every model $\theta \in \Theta$ calculate numbers $\Delta(\theta) = R_Z(g_Z,\theta)-\min_{q_X \in Q_X}  R_X(q_X,\theta)$; they form the gradient of function $\Phi$ at the current point $\tau$; 
\item calculate $\Delta=\max_\theta \Delta(\theta)$; 
it is the quality of the current procedure $g_Z$;
\item calculate
$\bar{\Delta}=\sum_\theta \tau(\theta)\Delta(\theta)$; it is the value of $\Phi(\tau)$ at the current point $\tau$; 
\item update the bounds as $S=\min\{S,\Delta\}$ and $s=\max\{s,\bar\Delta\}$; 
\item if $S-s<\varepsilon$ then $STOP$;
\item set the weights $\tau(\theta)$, $\theta \in \Theta$, to the projection of the weights $\tau(\theta)+\gamma_i \Delta(\theta)$ on $T$; 
\end{enumerate}
Supergradient optimization, which iteratively updates the initial weights, is performed only in steps 2, 3 and 7. 
The other steps calculate the data used in the stopping condition, which, as will be shown, guarantees the required accuracy.

Let $S$ and $s$ be the bounds after termination of the algorithm. It means that a learning procedure $g_Z'$ with the quality 
\begin{equation} \label{TheBestQ}
\max_{\theta \in \Theta} [R_Z(g_Z',\theta)-\min_{q_X \in Q_X}  R_X(q_X,\theta)]=S 
\end{equation}
has been obtained at some iteration. It also means that a weight function $\tau'$ with
\begin{equation} \label{the BestTau}
\Phi(\tau')=\biggl[\min_{g_Z \in G_Z}\sum_{\theta \in \Theta}\tau'(\theta)\cdot R_Z(g_Z,\theta)\biggr]-\biggl[\sum_{\theta \in \Theta}\tau'(\theta)\cdot \min_{q_X \in Q_X}R_X(q_X,\theta)\biggr]=s
\end{equation}
has been obtained at some (possibly different) iteration. 
As shown by the following theorem, the procedure $g_Z'$ and the weight function $\tau'$ have the property that the difference between the quality of the obtained procedure 
and the best procedure is not greater than $\varepsilon$.

\begin{theorem} \label{Exactness}
	The learning procedure $g_Z'$ satisfies the inequality
	\begin{equation} \nonumber 
		\max_{\theta \in \Theta} [R_Z(g_Z',\theta)-\min_{q_X \in Q_X}  R_X(q_X,\theta)]-
		\min_{g_Z \in G_Z}\max_{\theta \in \Theta} [R_Z(g_Z,\theta)-\min_{q_X \in Q_X}  R_X(q_X,\theta)]<\varepsilon.
	\end{equation}
\end{theorem}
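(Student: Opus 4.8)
The plan is to read the result off from weak duality between minimizing over learning procedures and maximizing over weight functions, whose two sides are precisely the bounds $S$ and $s$ maintained by the technique. Write $m(\theta)=\min_{q_X \in Q_X}R_X(q_X,\theta)$ for brevity. By (\ref{TheBestQ}) the procedure $g_Z'$ has quality exactly $S$, so the quantity to be bounded equals $S-\min_{g_Z \in G_Z}\max_{\theta \in \Theta}[R_Z(g_Z,\theta)-m(\theta)]$; since the stopping rule of step~6 guarantees $S-s<\varepsilon$, it suffices to prove that $\min_{g_Z \in G_Z}\max_{\theta \in \Theta}[R_Z(g_Z,\theta)-m(\theta)]\ge s$.

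First I would establish the elementary inequality that for every $g_Z \in G_Z$ and every $\tau \in T$,
\[
\max_{\theta \in \Theta}[R_Z(g_Z,\theta)-m(\theta)] \;\ge\; \sum_{\theta \in \Theta}\tau(\theta)\,[R_Z(g_Z,\theta)-m(\theta)],
\]
because $\tau$ is a probability distribution on $\Theta$ and a maximum dominates any weighted average. Minimizing the right-hand side over $g_Z$, and using that $\sum_{\theta \in \Theta}\tau(\theta)m(\theta)$ is independent of $g_Z$, turns it into $\Phi(\tau)$ as defined in Theorem~\ref{ThSubOptimalConcave}. Hence $\max_{\theta \in \Theta}[R_Z(g_Z,\theta)-m(\theta)]\ge \Phi(\tau)$ for all $g_Z$ and all $\tau$; taking the minimum over $g_Z$ on the left, which is legitimate because the right-hand side does not depend on $g_Z$, gives $\min_{g_Z \in G_Z}\max_{\theta \in \Theta}[R_Z(g_Z,\theta)-m(\theta)]\ge \Phi(\tau)$ for every $\tau \in T$.

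Next I would specialize $\tau$ to the weight function $\tau'$ that was produced at some iteration and for which, by (\ref{the BestTau}), $\Phi(\tau')=s$; this yields $\min_{g_Z \in G_Z}\max_{\theta \in \Theta}[R_Z(g_Z,\theta)-m(\theta)]\ge s$. Combining this with $\max_{\theta \in \Theta}[R_Z(g_Z',\theta)-m(\theta)]=S$ from (\ref{TheBestQ}) and with $S-s<\varepsilon$ from the stopping condition finishes the argument, since
\[
\max_{\theta \in \Theta}[R_Z(g_Z',\theta)-m(\theta)]-\min_{g_Z \in G_Z}\max_{\theta \in \Theta}[R_Z(g_Z,\theta)-m(\theta)] \le S-s < \varepsilon.
\]

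I do not expect a serious obstacle: the only duality used is the trivial ``max $\ge$ weighted average'' direction, not the minimax equality invoked elsewhere in the paper, so no convexity or compactness hypotheses need to be rechecked. The one point requiring care is the bookkeeping: one must observe that the monotone updates $S=\min\{S,\Delta\}$ and $s=\max\{s,\bar\Delta\}$ ensure the terminal $S$ really is the quality of some procedure $g_Z'$ actually constructed in step~1, and the terminal $s$ really is the value of $\Phi$ at some weight function $\tau'$ actually visited — which is exactly what makes (\ref{TheBestQ}), (\ref{the BestTau}), and hence the application of the inequality at $\tau=\tau'$, legitimate.
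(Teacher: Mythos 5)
Your proof is correct and follows essentially the same route as the paper's: both sandwich the optimal value $\min_{g_Z \in G_Z}\max_{\theta \in \Theta}\bigl[R_Z(g_Z,\theta)-\min_{q_X \in Q_X}R_X(q_X,\theta)\bigr]$ from below by $s$ (via the weight function $\tau'$ with $\Phi(\tau')=s$) and from above by $S$ (via the procedure $g_Z'$), and then invoke the stopping condition $S-s<\varepsilon$. The only difference is that the paper's chain passes through the equality $\max_{\tau \in T}\min_{g_Z \in G_Z}F(\tau,g_Z)=\min_{g_Z \in G_Z}\max_{\tau \in T}F(\tau,g_Z)$, i.e.\ it again invokes the minimax duality theorem, whereas you correctly observe that only the weak-duality direction (a maximum dominates any weighted average) is needed for the lower bound; your version is marginally more economical and needs no convexity hypotheses, but the substance and bookkeeping are the same.
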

\begin{proof}
	As before, we consider the function
	\begin{equation} \nonumber 
		F(\tau,g_Z)=\sum_{\theta \in \Theta}\tau(\theta)\bigl[  R_Z(g_Z,\theta)-\min_{q_X \in Q_X}R_X(q_X,\theta)\bigr].
	\end{equation}
	By (\ref{TheBestQ}) and (\ref{the BestTau}), for the procedure $g_X'$ and the weight function $\tau'$ we have the chain
	\begin{align*}
		s&=\min_{g_Z \in G_Z}\sum_{\theta \in \Theta}\tau'(\theta)\bigl[  R_Z(g_Z,\theta)-\min_{q_X \in Q_X}R_X(q_X,\theta)\bigr]
		=\min_{g_Z \in G_Z} F(\tau',g_Z) \\
                &\leq\max_{\tau \in T}\min_{g_Z \in G_Z }F(\tau,g_Z)
		=\min_{g_Z \in G_Z}\max_{\tau \in T}F(\tau,g_Z) \\
                &=\min_{g_Z \in G_Z}\max_{\tau \in T}\sum_{\theta \in \Theta}\tau(\theta)\bigl[  R_Z(g_Z,\theta)-\min_{q_X \in Q}R_X(q_X,\theta)\bigr] \\
		&=\min_{g_Z \in G_Z}\max_{\theta \in \Theta}[  R_Z(g_Z,\theta)-\min_{q_X \in Q_X}R_X(q_X,\theta)\bigr]\\
                &\leq\max_{\theta \in \Theta}[  R_Z(g_Z',\theta)-\min_{q_X \in Q_X}R_X(q_X,\theta)\bigr]=S .
	\end{align*}
	From this chain and the stopping condition $S-s<\varepsilon$, it follows that 
	\begin{equation} \nonumber
		\max_{\theta \in \Theta} [R_Z(g_Z',\theta)-\min_{q_X \in Q_X}  R_X(q_X,\theta)]-
		\min_{g_Z \in G_Z}\max_{\theta \in \Theta} [R_Z(g_Z,\theta)-\min_{q_X \in Q_X}  R_X(q_X,\theta)]<\varepsilon.
	\end{equation}
\end{proof}
\section{Examples}
\label{Testing}

 For several simplest cases, 
 the closest to optimal learning procedure $g^0$ is compared with the maximum likelihood learning procedure $g^{ML}$ and with another learning procedures $g^H$ (to be defined later).  
 In all the examples, we have an object with sets $X=\mathbb{R}$, $Y=\{1,2\}$ and probability density distribution of the form
 \begin{equation}\nonumber
 	p(x,y;\theta)=p_y \cdot (\sqrt{2\pi})^{-1}\cdot {\rm e}^{-\frac{1}{2}(x-\mu_y)^2}
 \end{equation}
 with some unknown parameters specified later.  
 Such object is a complex object in a sense of Definition \ref{DefComplex} and the compared learning procedures for it can be implemented almost exactly. 
 We use the loss function~(\ref{LossFunction}). The required accuracy for constructing closest to optimal procedures is $\varepsilon=0.01$.

\begin{example*}\label{bondarenkoExample}
	\cite{bondarenkoEng} Let $\mu_1=1$,  $\mu_2=(-1)$, and only the a priori probabilities $p_y$, $y \in \{1,2\}$, be unknown. 
	Thus, the set of models is $\Theta=\{ \theta\mid0 \leq \theta \leq 1 \} $ and the a priori probabilities $p_y$ of the states are $p_1=\theta$, $p_2=1-\theta$. 
	This is the example used by H.Robbins to explain his idea of empirical Bayesian approach. 
	We simplify the example even further by assuming that the learning information is not  a signal sample $(x_1,x_2, \ldots ,x_n)$ but a state sample $(y_1,y_2, \ldots ,y_n)$. 
    To use the technique from Section~\ref{SubBayesianLearning} the set $\Theta$ is discretized with a step $0.05$.
	
	Figure~\ref{test1Bayesminmax} (page \pageref{test1Bayesminmax}) shows how the probabilities $\min_{q \in Q}R(q,\theta)$ and  $R(q^{\rm minmax},\theta)$ depend on model $\theta$. 
	Note that they do not depend on the sample size $n$ because they do not depend on the sample at all.
	Figure~\ref{test1} shows how the risks $R_G(g^{ML},\theta)$ and $R_G(g^0,\theta)$ depend on the model $\theta$ for the sample sizes $n=1,2,4$.
    
	
	It is striking how high the risk $R_G(g^{ML},\theta)$ is for some models. 
	Of course, one cannot expect the risk to be too low because the learning samples are very small and thus they bring little information about the true model. 
    Nevertheless, however small the amount of information, it is non-zero and it should be used to improve subsequent recognition, at least to some extent. 
    But the example shows that it is much better to ignore this information than to use it with maximum likelihood learning. 
    The example illustrates quite transparently the main drawback of maximum likelihood learning.
\end{example*}

    \begin{figure}[!h] 
		\begin{tabular}{c c}
			\includegraphics[height=0.35\textwidth]{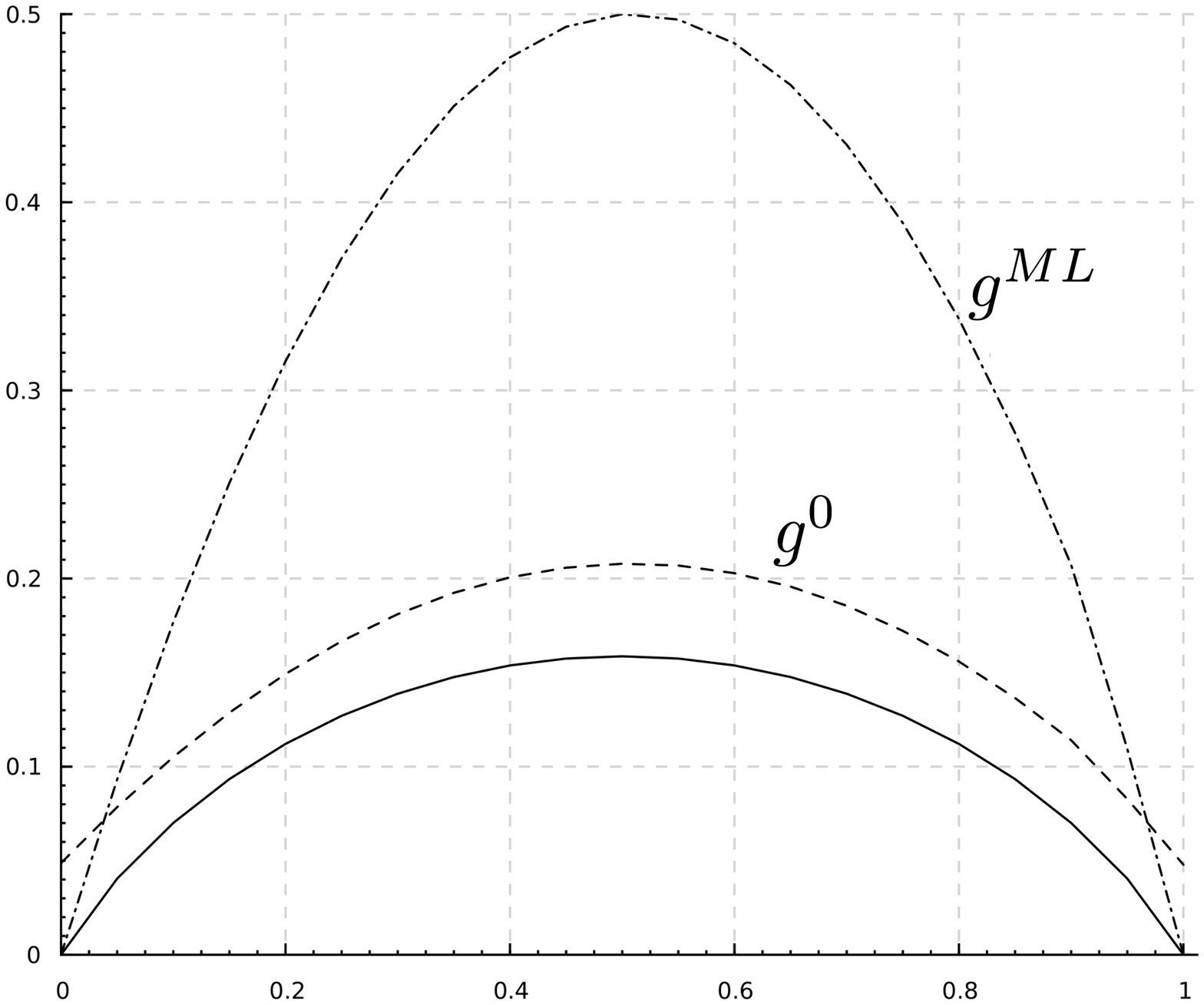} & \includegraphics[height=0.35\textwidth]{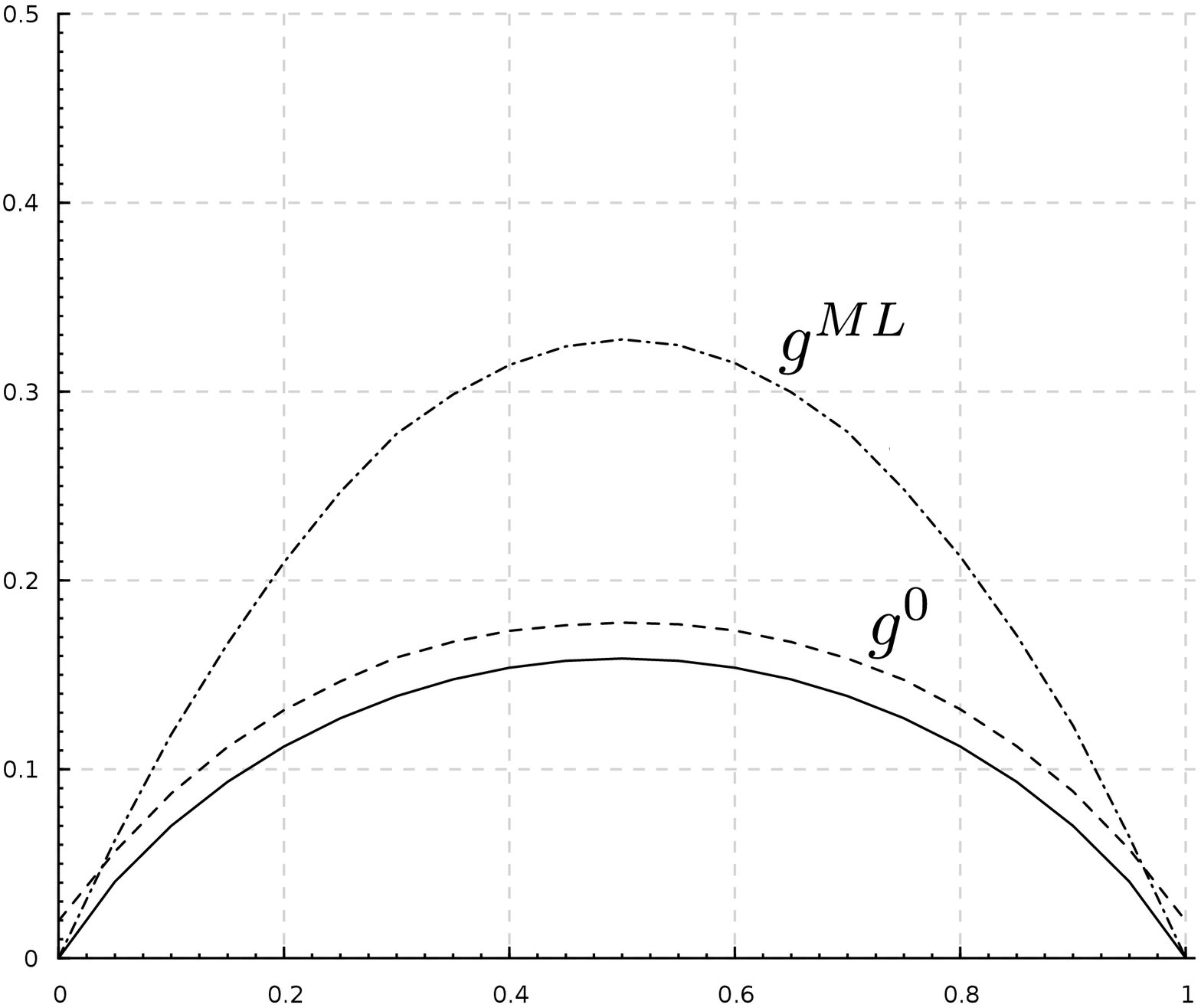} \\
			$n = 1$ & $n = 2$ \\
			\includegraphics[height=0.35\textwidth]{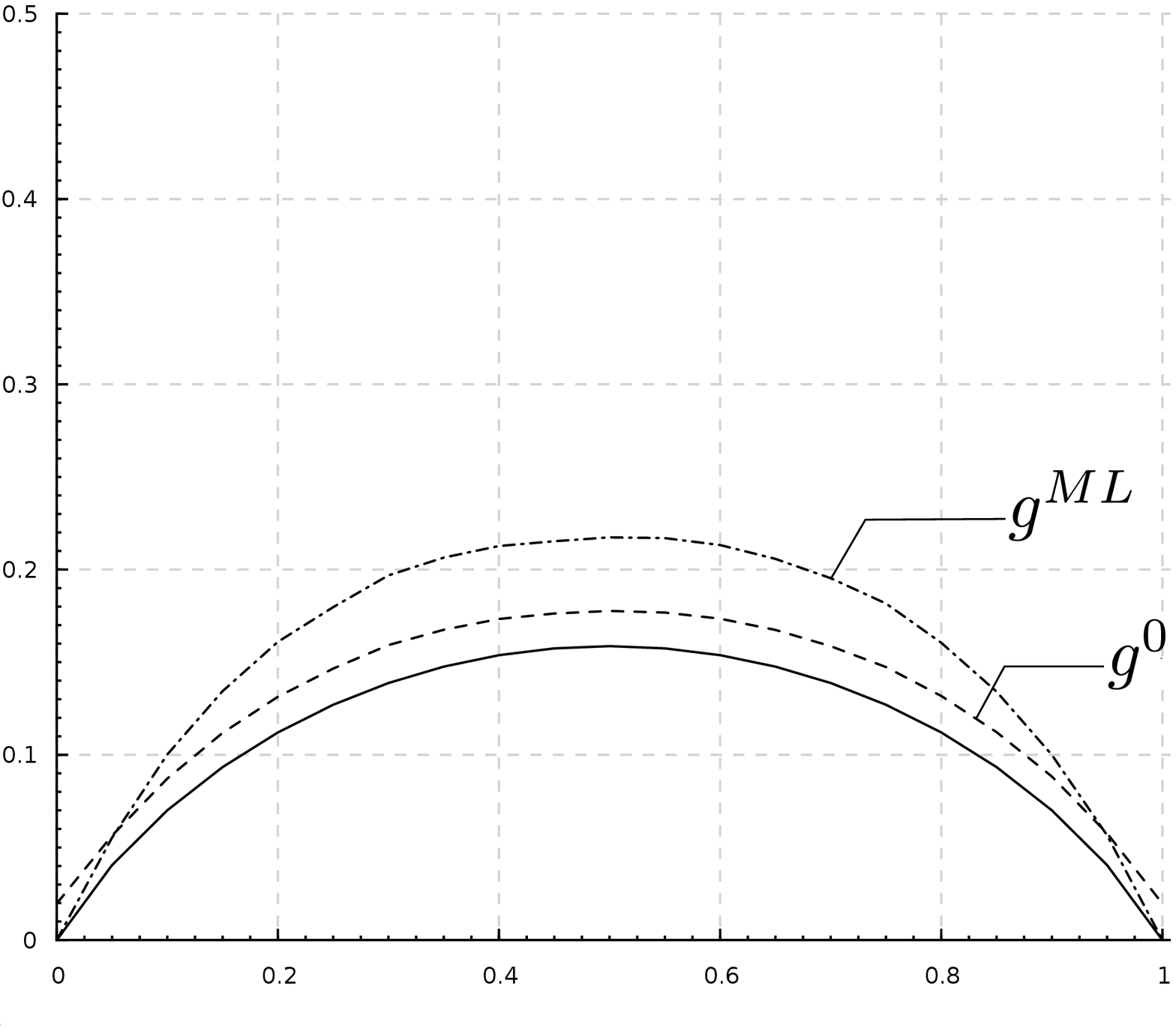} & \\
			$n = 4$ &  \\
		\end{tabular}
		\caption{EXAMPLE \ref{bondarenkoExample}. The dependency of $R_G(g^0,\theta)$ and $R_G(g^{ML},\theta)$ on $\theta$ for sample sizes $n=1,2,4$. The solid curve shows risk $\min_{q \in Q}R(q,\theta)$.}
		\label{test1}
	\end{figure}

\begin{example} \label{RobbinsExample} 
	For the same complex object as in Example \ref{bondarenkoExample}, let the learning information be a sequence $(x_1,x_2, \ldots ,x_n)$ rather than $(y_1,y_2, \ldots ,y_n)$. 
	This is exactly the case considered by H.Robbins. The risk $R_G(g^0,\theta)$ of closest to optimal learning is compared with the risk $R_G(g^{ML},\theta)$ of maximum likelihood learning. 
	Even for this  simple complex object, it is not easy to find the maximum likelihood model
	\begin{equation}  \nonumber
		\theta_1= \argmax_{\theta \in \Theta}\sum_{i=1}^n \log \Bigl[\theta \cdot  {\rm e}^{-\frac{1}{2}(x_i-1)^2}+
		(1-\theta) \cdot  {\rm e}^{-\frac{1}{2}(x_i+1)^2}\Bigr].
	\end{equation}
	Therefore, we also consider the heuristic procedure~(\ref{RobbinsHeur}) by H.Robbins, which is denoted by~$g^H$. 
    The procedure is not based on the maximum likelihood estimate~$\theta_1$ but on the consistent estimate $\theta_2= \frac{1}{2n}\sum\limits_{i=1}^n x_i + \frac{1}{2}$,
	which can be easily calculated. Figure \ref{test2} shows how the risks $R_G(g^0,\theta)$, $R_G(g^{ML},\theta)$ and $R_G(g^H,\theta)$ depend on the model~$\theta$ for the sample sizes $n=1,2,4.$
\end{example}

	\begin{figure}[!h] 
		\begin{tabular}{c c}
			\includegraphics[height=0.35\textwidth]{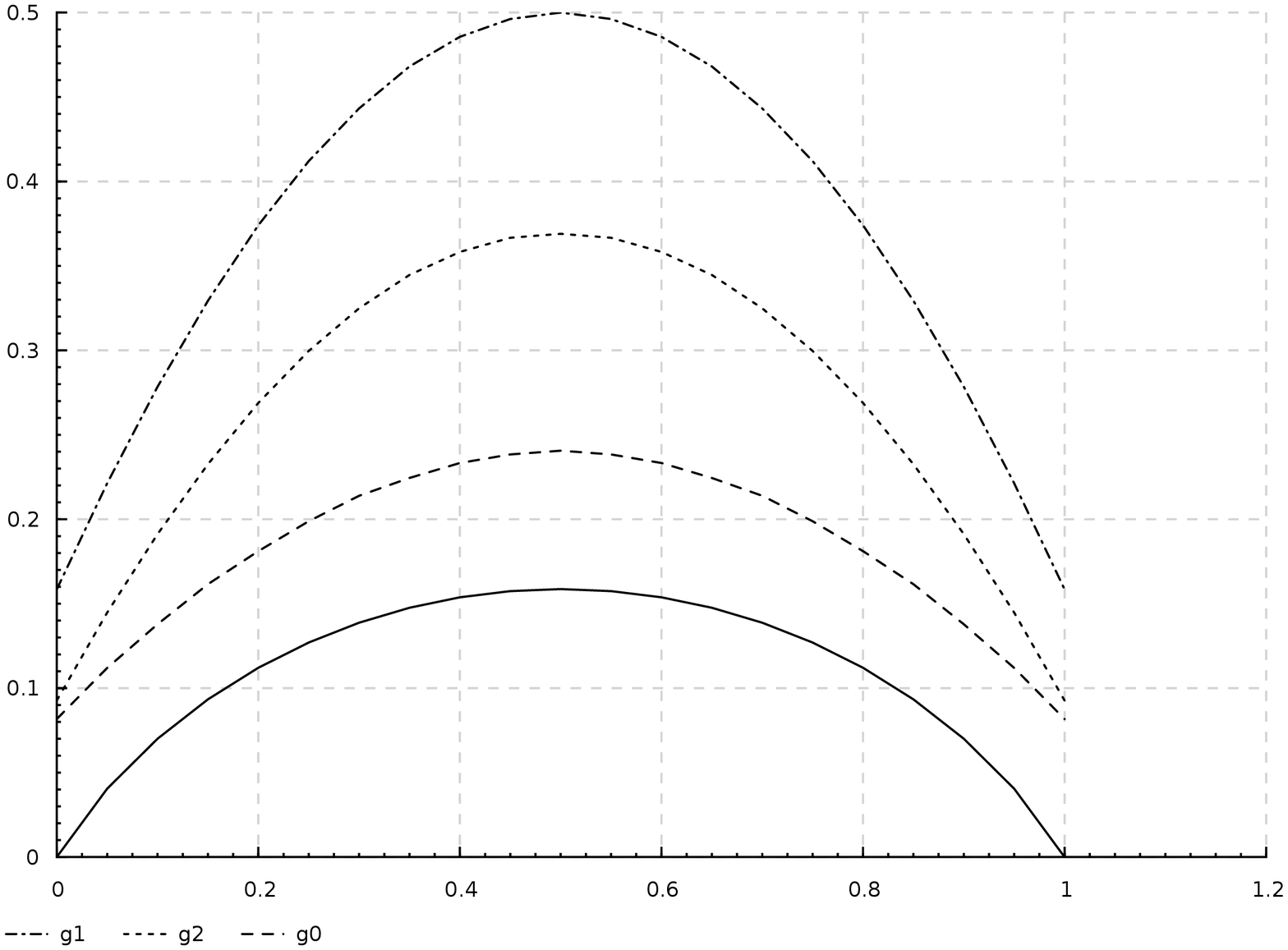} & \includegraphics[height=0.35\textwidth]{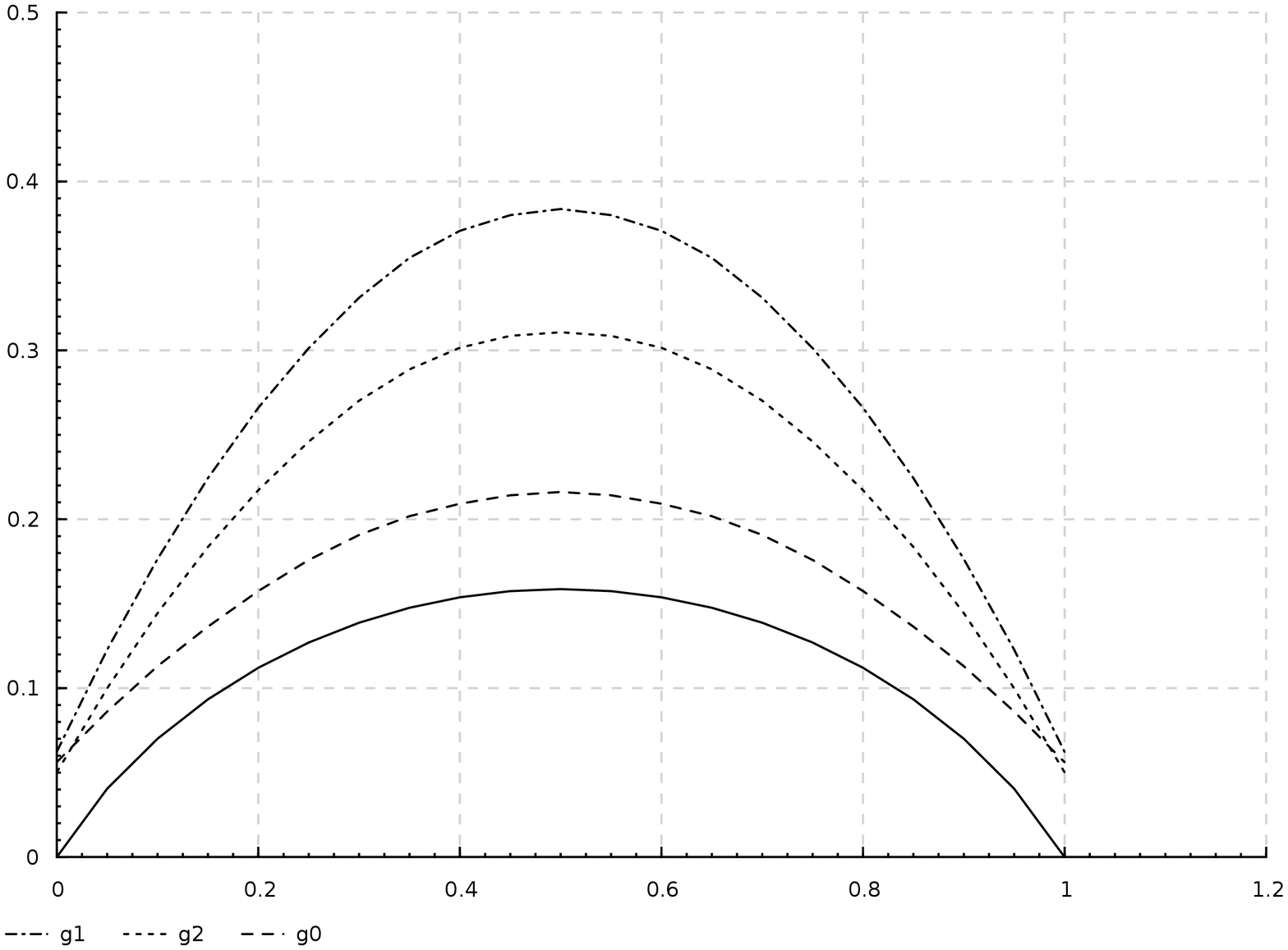} \\
			$n = 1$ & $n = 2$ \\
			\includegraphics[height=0.35\textwidth]{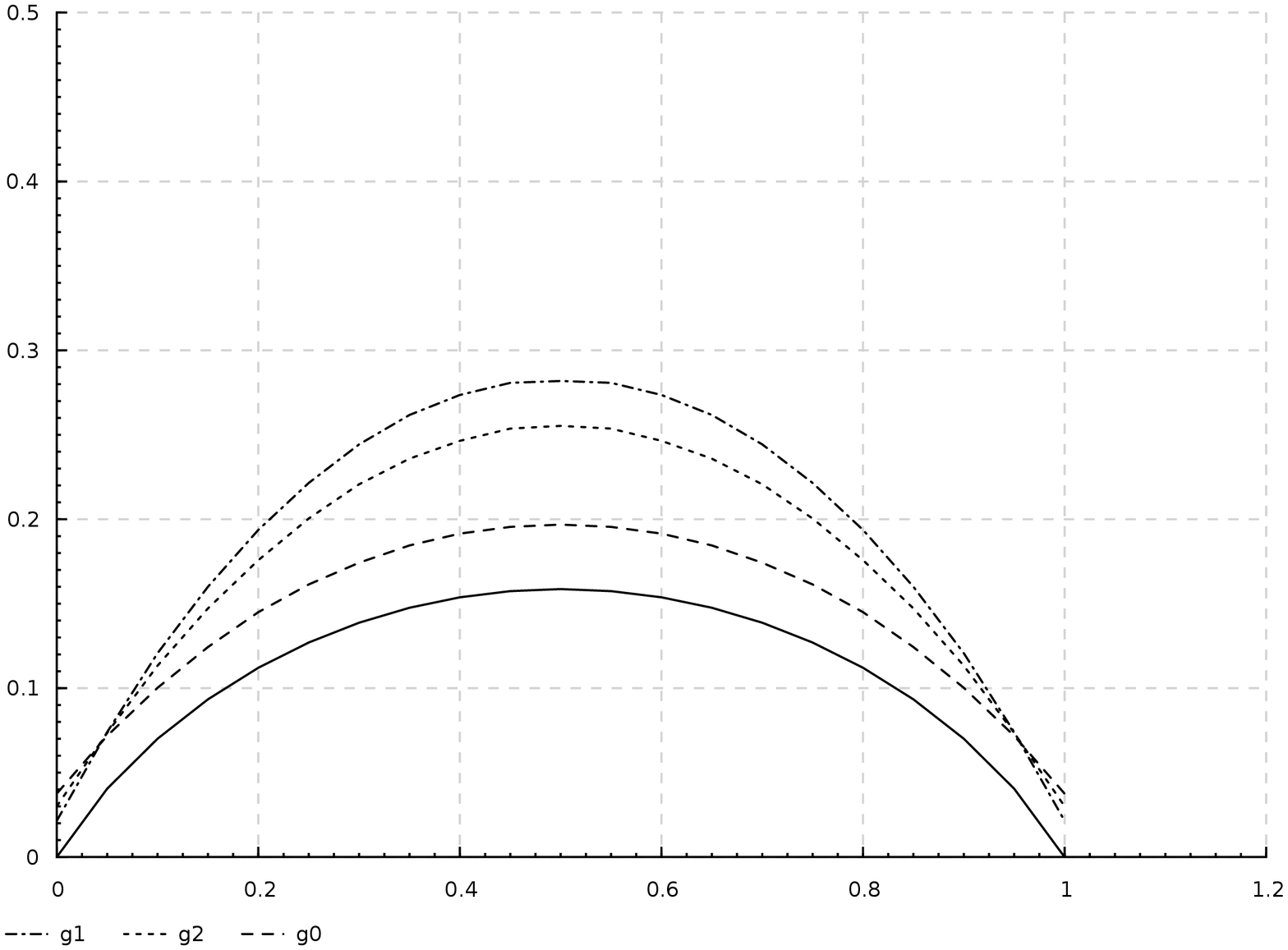} & \\
			$n = 4$ &  \\
		\end{tabular}
		\caption{EXAMPLE \ref{RobbinsExample}. The dependency of $R_G(g^0,\theta)$, $R_G(g^{ML},\theta)$, $R_G(g^H,\theta)$ on~$\theta$ for sample sizes $n=1,2,4$. The solid curve shows the risk $\min_{q \in Q}R(q,\theta)$.}
		\label{test2}
	\end{figure}
\begin{example}\label{firstStateExample}
	Let $X=\mathbb{R}$, $Y=\{1,2\}$,
$p_1=p_2=0.5$, and $\mu_1=0$. The expected value $\mu_2=\theta$ is unknown, it is only known that it belongs to the set $\Theta= \{ -6, -5.8, -5.6, \dots, 5.8, 6\}$ of $61$ possible values. 
Thus, 
	\begin{align*}
		p(x,y=1;\theta)&=0.5 \cdot (\sqrt{2\pi})^{-1}\cdot {\rm e}^ {-\frac{1}{2}x^2}, \\
		p(x,y=2;\theta)&=0.5 \cdot (\sqrt{2\pi})^{-1}\cdot {\rm e}^{-\frac{1}{2}(x-\theta)^2}.
	\end{align*}
	The learning information is a sequence $(x_1,x_2, \ldots ,x_n)$ of random signals generated by the object in state~$y=2$. 
	We did not observe any significant difference between maximum likelihood learning and closest to optimal learning.  
\end{example}
    	\begin{figure}[!h] 
    		\begin{tabular}{c c}
    			\includegraphics[height=0.35\textwidth]{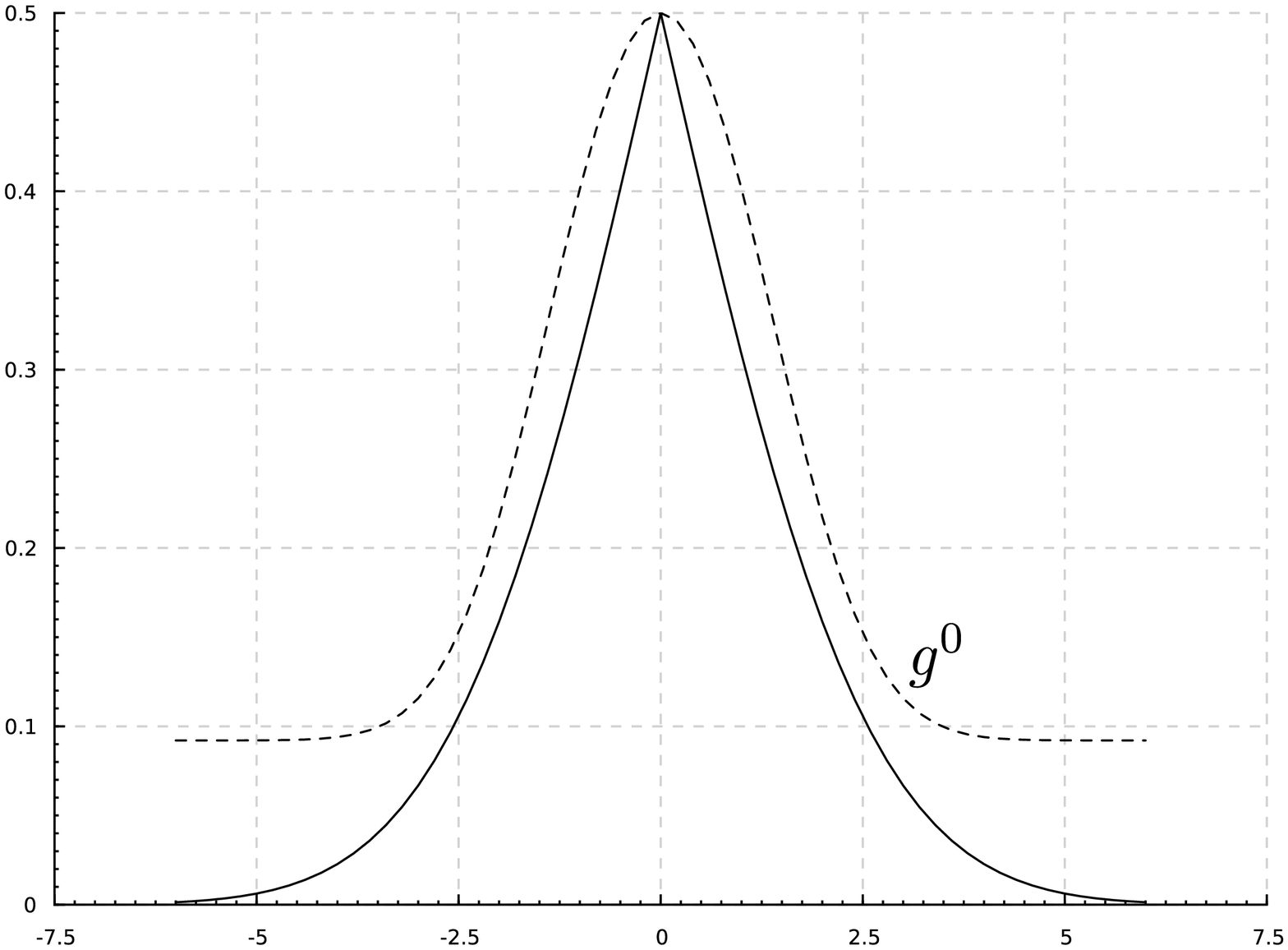} & \includegraphics[height=0.35\textwidth]{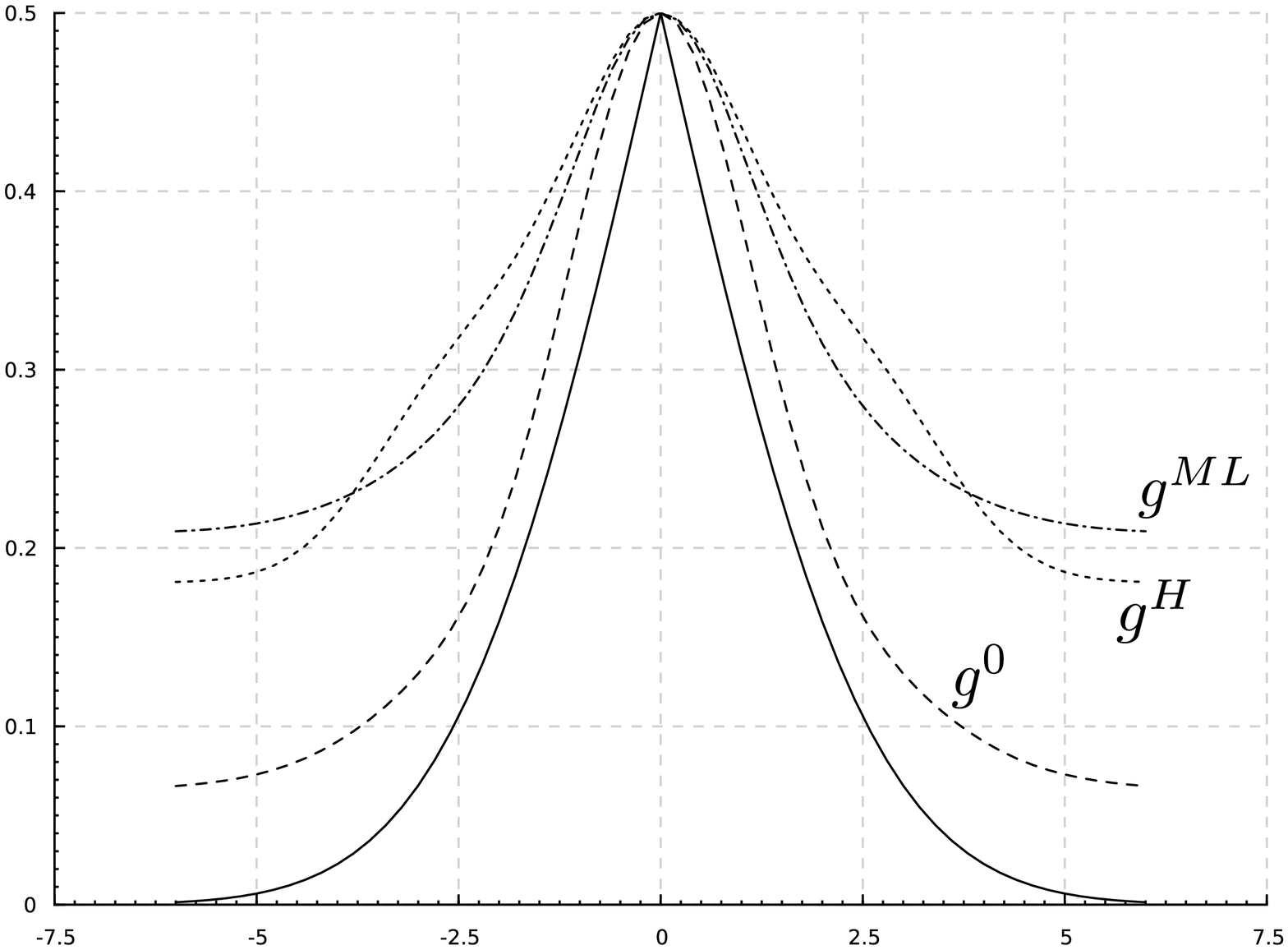} \\
    
    			$n=0$ & $n=1$ \\
    
    			\includegraphics[height=0.35\textwidth]{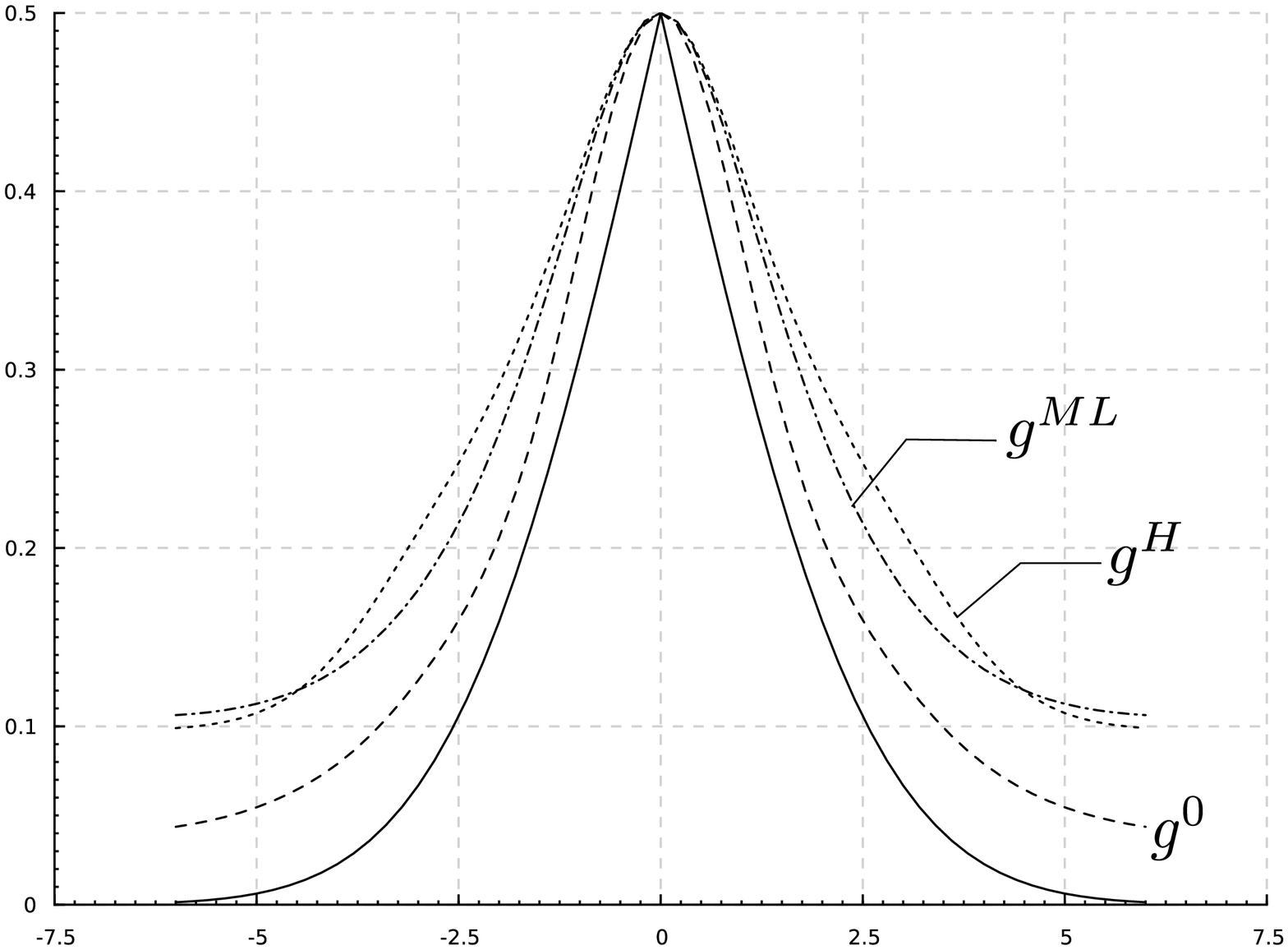} & \includegraphics[height=0.35\textwidth]{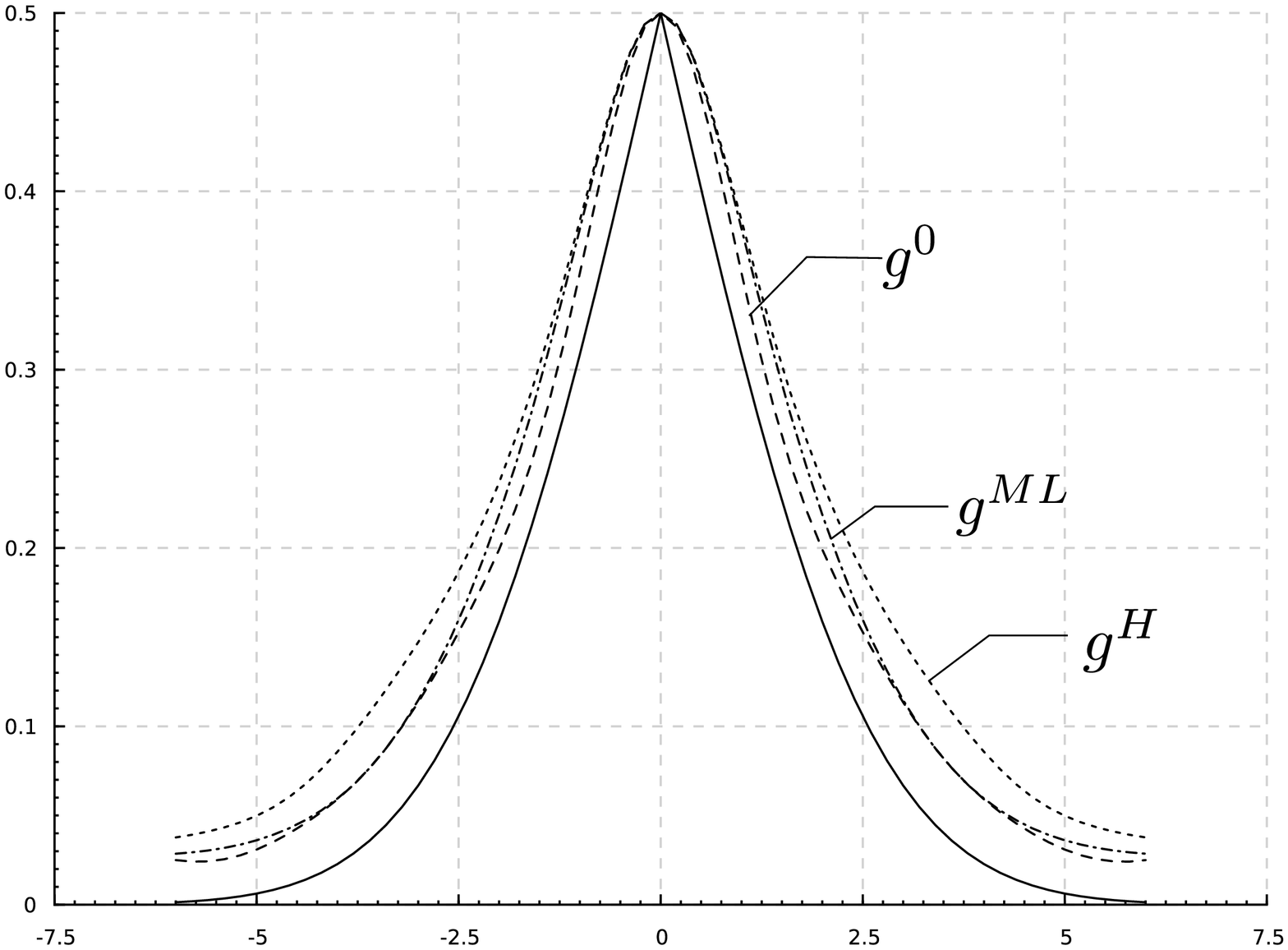} \\
    
    			$n=2$ & $n=4$ 
    
    		\end{tabular}
    		\caption{EXAMPLE \ref{StateExample}. Dependency of the risks $R_G(g^0,\theta)$, $R_G(g^{ML},\theta)$, $R_G(g^H,\theta)$ on the model $\theta$ for sample sizes $n=0,1,2,4$. 
            The solid curve shows the risk $\min_{q \in Q}R(q,\theta)$.}
    		\label{test4}
    	\end{figure}
\begin{example}\label{StateExample}
	Here we have the same object as in the previous example but the learning information is more complicated. 
	It is a sequence $(x_1,x_2, \ldots ,x_n)$ of signals generated by the object with unknown states. In this case, the maximum likelihood estimate 
	\begin{equation} \nonumber
		\theta_1= \argmax_{\theta \in \Theta}\sum_{i=1}^n \log \Bigl[ {\rm e}^{-\frac{1}{2}x_i^2}+
		{\rm e}^{-\frac{1}{2}(x_i-\theta)^2}\Bigr]
	\end{equation}
	is not easy to find. 
    Therefore, besides the maximum likelihood estimate we also consider the consistent estimate $\theta_2=\frac{2}{n} \sum\limits_{i=1}^n x_i$ along with the corresponding learning procedure~$g^H$. 
    Figure~\ref{test4} shows how the risks $R_G(g^0,\theta)$, $R_G(g^{ML},\theta)$, $R_G(g^H,\theta)$ depend on the model~$\theta$ for the sample sizes $n=1,2,4$. 
    One can see that neither the maximal likelihood procedure~$g^{ML}$ nor the heuristic procedure~$g^H$ dominates the other. 
    However, the closest to optimal procedure~$g^0$ dominates both $g^{ML}$ and~$g^H$, especially for small sample sizes. 
    We have not observed any significant difference between the compared procedures for the learning sample sizes larger than 10.

        The case of an empty learning sample $(n=0)$ is of a particular interest. 
        In this case the maximum-likelihood learning problem cannot even be formulated because any model estimate is meaningless when empirical data are absent. 
        The minimax requirement for recognition strategy may be formally stated but it result in a strategy that makes a wrong decision with probability 0.5 for each true model. 
        As for the closest to optimal requirement, it can be formulated for this case as well and results in a quite reasonable strategy. 
        Figure \ref{test4} shows how the risk $R_G(g^0,\theta)$ depends on $\theta$ when the learning sample is absent at all.
        
\end{example}

\section{Results and corollaries}
\label{Novelty}


%

A common feature of known methods of recognition with learning is that a single model is selected from a given model set. 
This selected model `best' matches the  learning sample, usually it is a consistent estimate of the model. 
Nevertheless, only a single model is selected from the set of all possible models. Then the recognition strategy is constructed as if the selected model was the true one. 
Due to consistency of the model estimate, such methods are acceptable when the learning sample is large enough. 
However, the methods give no guarantees for subsequent recognition when the learning sample has limited size, especially 
when the size is obviously insufficient to determine the model unambiguously.  
The drawback is well-known as the small sample problem and forms an obvious gap in today's knowledge of recognition with uncertain statistical model. 
Strictly speaking, clarity is achieved only in two extreme cases. If no learning sample is available, then it is appropriate to use the minimax approach. 
If the sample is large enough, then it is possible to use known methods of recognition with learning. 
If the situation falls somewhere inbetween, the application developer faces the question whether he should consider the sample large enough 
and use known methods of learning, or consider the sample too short and ignore it.

The concept of closest to optimal strategy allows to consider the recognition with learning from a point of view that differs from the conventional one. 
It answers some questions that were unanswered up to now. However, new questions arise that are not visible from the conventional viewpoint.   
Moreover, it turns out that the third and many other points of view on the problem are possible.

1. Both recognition without learning and recognition with learning can be formalized in a unified framework as developing a closest to optimal recognition strategy. 
The formalization covers the whole range of learning sample sizes, including the zero size.

2. The concept of closest to optimal strategy enables to formalize the main idea of empirical Bayesian approach of H.Robbins, namely
that simultaneous recognition of multiple objects can be performed much better than recognition of each object independently. 
The problem of compound object recognition can be formulated as finding a closest to optimal strategy. 

3. The uniform formulation of the recognition problem with respect to the sample size allows to achieve a satisfactory clarity in the small sample problem. 
Each learning sample, however small, is more useful than its absence. 
It is not the size of the learning sample that determines its sufficiency for subsequent recognition, it is not even the learning sample itself. 
Any sample, however small, can be sufficient for recognizing some objects, as well as even a very large sample can be insufficient for recognizing some other objects. 
It is the pair "sample--signal" that is sufficient or insufficient for recognition.  
It is not necessary to amend the developed approach in order to distinguish between sufficient and insufficient pairs "sample--signal". 
The discrimination can be achieved with tools that are already present in the approach. 
The decision about current object state has to take values from the set $Y \cup \{ \sharp \} $ where $\sharp$ is the decision that the given pair "learning sample--signal" 
is insufficient to recognize the hidden state with a small enough risk. Thus, if the small sample problem is correctly formulated, it disappears as an independent problem. 
It is subsumed by the correct formulation of the recognition problem and inside this more general problem it is solved more subtly, more exactly.

4. Similarly, recognition with learning becomes a special case of recognition without learning. 
In this particular case, the data available for recognition consist of two parts. 
The first part depends both on the hidden object state and on the unknown model. 
The second part directly depends on the model and does not depend on the state when the model is fixed.  

5. To find the closest to optimal strategy for a realistic application, it is necessary to overcome fundamental computational difficulties.
This is nothing new in the pattern recognition practice. 
Maximum likelihood estimation or empirical risk minimization are also far from trivial even in very simple realistic situations. 
In practice, they need to be implemented with certain acceptable simplifications. 
Similar difficulties can be expected for the developed approach.

6. There is only one universal statement in the presented approach. It is the dichotomy between Bayesian and improper strategies. 
In contrast, closest to optimal strategy can be defined in many other reasonable ways. According to our definition, a strategy~$q_X^*$ is closest to optimal if it satisfies the conditions
\begin{equation} \nonumber 
R_X(q_X^*,\theta)-\min_{q_X' \in Q_X}R_X(q_X',\theta)\leq c, \quad \theta \in \Theta, 
\end{equation}
with the minimal value of~$c$. If the risks $R_X(q_X,\theta)$ are not negative, the conditions
\begin{equation}  \nonumber 
R_X(q_X^*,\theta) \leq c \cdot \min_{q_X' \in Q_X}R_X(q_X',\theta), \quad \theta \in \Theta,
\end{equation}
are no less reasonable. Moreover, it is not obligatory to use the optimal risks $\min\limits_{q \in Q_X}R_X(q,\theta)$ in these conditions. 
Any other values $R^*(\theta)$ can be used that express the desired parameters of the recognition system under development. 
However, independently of the possibly modified requirements, the developed strategy has to be Bayesian and its search is reduced to looking for an 
appropriated weight function over the set of possible models.

The article answers the questions that were formulated  in \cite[page 272]{schlez10lec} several years ago. 
The questions gave an opportunity to look at the problem from an unconventional point of view. 
Now, new questions arise and one can see that in spite of the long history of recognition with learning, a  much longer way is ahead.

We are grateful to T.Werner, J.Matas and B.Flach for fruitful discussions, critical remarks and valuable advice.

\bibliography{biblio_cp1251}{}

\begin{thebibliography}{10}

\bibitem{AlaizRodriguez}
Roc\'{\i}o Alaiz-Rodr\'{\i}guez, Alicia Guerrero-Curieses, and Jes\'{u}s
  Cid-Sueiro.
\newblock Minimax regret classifier for imprecise class distributions.
\newblock {\em J. Mach. Learn. Res.}, 8:103--130, May 2007.

\bibitem{borwein}
J.M. Borwein and A.S. Lewis.
\newblock {\em Convex Analysis and Nonlinear Optimization}.
\newblock Springer Verlag, 2000.

\bibitem{boyd}
S.~Boyd and L.~Vandenberghe.
\newblock {\em Convex Optimization}.
\newblock Cambrige University Press, 2004.

\bibitem{casella}
George Casella.
\newblock An introduction to empirical bayes data analysis.
\newblock {\em The American Statistician}, 39(2):83--87, 2009.

\bibitem{chernoff}
H.~Chernoff and H.C.L.E. Moses.
\newblock {\em Elementary Decision Theory}.
\newblock Wiley series in probability and mathematical statistics: Applied
  probability and statistics. Dover Publ., 1959.

\bibitem{dempsterEM}
A.P. Dempster, N.M. Laird, and D.B. Rubin.
\newblock Maximum likelihood from incomplete data via the em algorithm.
\newblock {\em Journal of the Royal Statistical Society. Series B
  (Methodological)}, 39(1):1--38, 1977.

\bibitem{duda}
Richard~O. Duda, Peter~E. Hart, and David~G. Stork.
\newblock {\em Pattern Classification}.
\newblock Wiley, 2000.

\bibitem{ehrgott2005}
M.~Ehrgott.
\newblock {\em Multicriteria Optimization}.
\newblock Lecture notes in economics and mathematical systems. Springer, 2005.

\bibitem{geoffrion1968}
Arthur~M. Geoffrion.
\newblock {Proper Efficiency and the Theory of Vector Maximization}.
\newblock {\em Journal of Mathematics, Analysis and Applications},
  22(3):618--630, June 1968.

\bibitem{hiriart}
J.-B. Hiriart-Urruty and C.~Lemarechal.
\newblock {\em Fundamentals of Convex Analysis}.
\newblock Springer Verlag, 2002.

\bibitem{kendal}
M.G. Kendall and A.~Stuart.
\newblock {\em The Advanced Theory of Statistics: In Three Volumes}.
\newblock Number~2 in The Advanced Theory of Statistics. Griffin, 1979.

\bibitem{Kittler98oncombining}
Josef Kittler, Mohamad Hatef, Robert P.~W. Duin, and Jiri Matas.
\newblock On combining classifiers.
\newblock {\em IEEE Transaction on Pattern Analysis and Machine Intelligence},
  20:226--239, 1998.

\bibitem{lehman}
E.L.A. Lehmann.
\newblock {\em Testing Statistical Hypotheses}.
\newblock Springer Texts in Statistics. Springer Verlag, 1986.

\bibitem{morris}
H.DeGroot Morris.
\newblock {\em Optimal Statistical Decisions}.
\newblock McGraw-Hill Company, 1970.

\bibitem{neyman2Breaks}
J.~Neyman.
\newblock {Two Breakthroughs in the Theory of Statistical Decision Making}.
\newblock {\em Revue de l'Institut International de Statistique / Review of the
  International Statistical Institute}, 30(1):11--27, 1962.

\bibitem{robbinsAssymptotical}
Herbert Robbins.
\newblock {Asymptotically Subminimax Solutions of Compound Statistical Decision
  Problems}.
\newblock In Jerzy Neyman, editor, {\em Proceedings of the Second Berkeley
  Symposium on Mathematical Statistics and Probability}, pages 131--148.
  University of California Press, 1951.

\bibitem{robbinsEmpirical}
Herbert Robbins.
\newblock An empirical bayes approach to statistics.
\newblock {\em Proc. Third Berkeley Symp. on Math. Statist. and Prob.},
  1:157--163, 1956.

\bibitem{bondarenkoEng}
M.I. Schlesinger and A.~V. Bondarenko.
\newblock On pattern recognition learning problem formulation. (in russian).
\newblock {\em Control Systems and Computers}, 2:4--19, 2009.

\bibitem{schlez10lec}
M.I. Schlesinger and V.~Hlavac.
\newblock {\em Ten Lectures on Statistical and Structural Pattern Recognition}.
\newblock Computational Imaging and Vision Series. Kluwer Academic Pub, 2002.

\bibitem{schlezEM}
M.I. Shlezinger.
\newblock The interaction of learning and self-organization in pattern
  recognition.
\newblock {\em Kibernetika}, 2:81--88, 1968.

\bibitem{shorMethBook}
N.Z. Shor.
\newblock {\em Nondifferentiable Optimization and Polynomial Problems}.
\newblock Nonconvex Optimization and Its Applications. Springer, 1998.

\bibitem{webb}
Andrew~R. Webb.
\newblock {\em Statistical Pattern Recognition}.
\newblock Wiley, 2002.

\bibitem{zeleny1982}
M.~Zeleny.
\newblock {\em Multiple criteria decision making}.
\newblock McGraw-Hill series in quantitative methods for management.
  McGraw-Hill, 1982.

\end{thebibliography}
\bibliographystyle{plain}

\end{document}